\documentclass{article}


 \usepackage[nonatbib,final]{neurips_2019}




\usepackage[utf8]{inputenc} 
\usepackage[T1]{fontenc}    
\usepackage{hyperref}       
\usepackage{url}            
\usepackage{booktabs}       
\usepackage{amsfonts}       
\usepackage{nicefrac}       
\usepackage{microtype}      

\title{Deep ReLU Networks Have\\
Surprisingly Few Activation Patterns}

\author{
  Boris Hanin\\
  Facebook AI Research\\
  Texas A\&M University \\
  \texttt{bhanin@math.tamu.edu}
\And
  David Rolnick \\
  University of Pennsylvania\\
  Philadelphia, PA USA \\
  \texttt{drolnick@seas.upenn.edu}
}

\usepackage{array,booktabs,tabularx}
\usepackage{graphicx}
\usepackage{enumerate}
\usepackage{color}
\usepackage{mathtools}
\usepackage{dsfont}
\usepackage{enumitem}
\usepackage{amssymb,amsthm}

\newtheorem{theorem}{Theorem}

\newtheorem{proposition}[theorem]{Proposition}
\newtheorem{lemma}[theorem]{Lemma}

\newtheoremstyle{named}{}{}{\itshape}{}{\bfseries}{.}{.5em}{\thmnote{#3 }#1} \theoremstyle{named} 

\theoremstyle{definition}
\newtheorem{definition}{Definition}
\newtheorem{remark}{Remark}

\newcommand{\nin}{{n_\mathrm{in}}}
\newcommand{\nout}{{n_\mathrm{out}}}

\newcommand{\R}{{\mathbb R}}

\newcommand{\E}[1]{{\mathbb E}\left [#1\right]}

\newcommand{\ep}{\varepsilon}

\newcommand{\mB}{\mathcal B}

\newcommand{\gives}{\ensuremath{\rightarrow}}

\newcommand{\mA}{\mathcal A}
\newcommand{\mR}{\mathcal R}
\newcommand{\mC}{\mathcal C}
\newcommand{\mV}{\mathcal V}

\newcommand{\abs}[1]{\ensuremath{\left| #1 \right|}}
\newcommand{\lr}[1]{\ensuremath{\left(#1 \right)}}
\newcommand{\norm}[1]{\left\lVert#1\right\rVert}
\newcommand{\inprod}[2]{\ensuremath{\left\langle#1,#2\right\rangle}}

\newcommand{\twiddle}[1]{\ensuremath{\widetilde{#1}}}

\newcommand{\w}{\omega}

\newcommand{\set}[1]{\ensuremath{\{#1\}}}

\def\XXint#1#2#3{{\setbox0=\hbox{$#1{#2#3}{\int}$} \vcenter{\hbox{$#2#3$}}\kern-.5\wd0}}

\DeclareMathOperator{\vol}{vol}
\DeclareMathOperator{\Var}{Var}

\DeclareMathOperator{\Cov}{Cov}

\DeclareMathOperator{\A}{\mathcal A}

\newcommand{\mN}{\mathcal N}
\newcommand{\mNb}{\mathcal N^\text{bias}}

\newcommand{\mF}{\mathcal F}


\begin{document}

\maketitle
\begin{abstract}
\vspace{-0.25cm}
The success of deep networks has been attributed in part to their expressivity: per parameter, deep networks can approximate a richer class of functions than shallow networks. In ReLU networks, the number of activation patterns is one measure of expressivity; and the maximum number of patterns grows exponentially with the depth. However, recent work has showed that the practical expressivity of deep networks -- the functions they can learn rather than express -- is often far from the theoretical maximum. In this paper, we show that the average number of activation patterns for ReLU networks at initialization is bounded by the total number of neurons raised to the input dimension. We show empirically that this bound, which is independent of the depth, is tight both at initialization and during training, even on memorization tasks that should maximize the number of activation patterns. Our work suggests that realizing the full expressivity of deep networks may not be possible in practice, at least with current methods.
\end{abstract}

\vspace{-0.5cm}
\section{Introduction}
\vspace{-0.25cm}
A fundamental question in the theory of deep learning is why deeper networks often work better in practice than shallow ones. One proposed explanation is that, while even shallow neural networks are universal approximators \cite{barron1994approximation, cybenko1989approximation, funahashi1989approximate, hornik1989multilayer, pinkus1999approximation}, there are functions for which increased depth allows exponentially more efficient representations. This phenomenon has been quantified for various complexity measures \cite{bianchini2014complexity, cohen2016expressive, croce2018provable, hanin2017universal, lin2017does, montufar2014number, poole2016exponential, raghu2017expressive,  rolnick2017power, serra2018empirical}. However, authors such as Ba and Caruana have called into question this point of view \cite{ba2014deep}, observing that shallow networks can often be trained to imitate deep networks and thus that functions learned in practice by deep networks may not achieve the full expressive power of depth.


In this article, we attempt to capture the difference between the maximum complexity of deep networks and the complexity of functions that are actually learned (see Figure \ref{fig:function_space}). We provide theoretical and empirical analyses of the typical complexity of the function computed by a ReLU network $\mN$. Given a vector $\theta$ of its trainable parameters, $\mN$ computes a continuous and piecewise linear function $x\mapsto \mathcal N(x;\theta)$. Each $\theta$ thus is associated with a partition of input space $\R^{\nin}$ into activation regions, polytopes on which $\mN(x;\theta)$ computes a single linear function corresponding to a fixed activation pattern in the neurons of $\mN.$
\begin{figure}[ht]
  \centering
\vspace{-0.5em}
\includegraphics[scale=0.28]{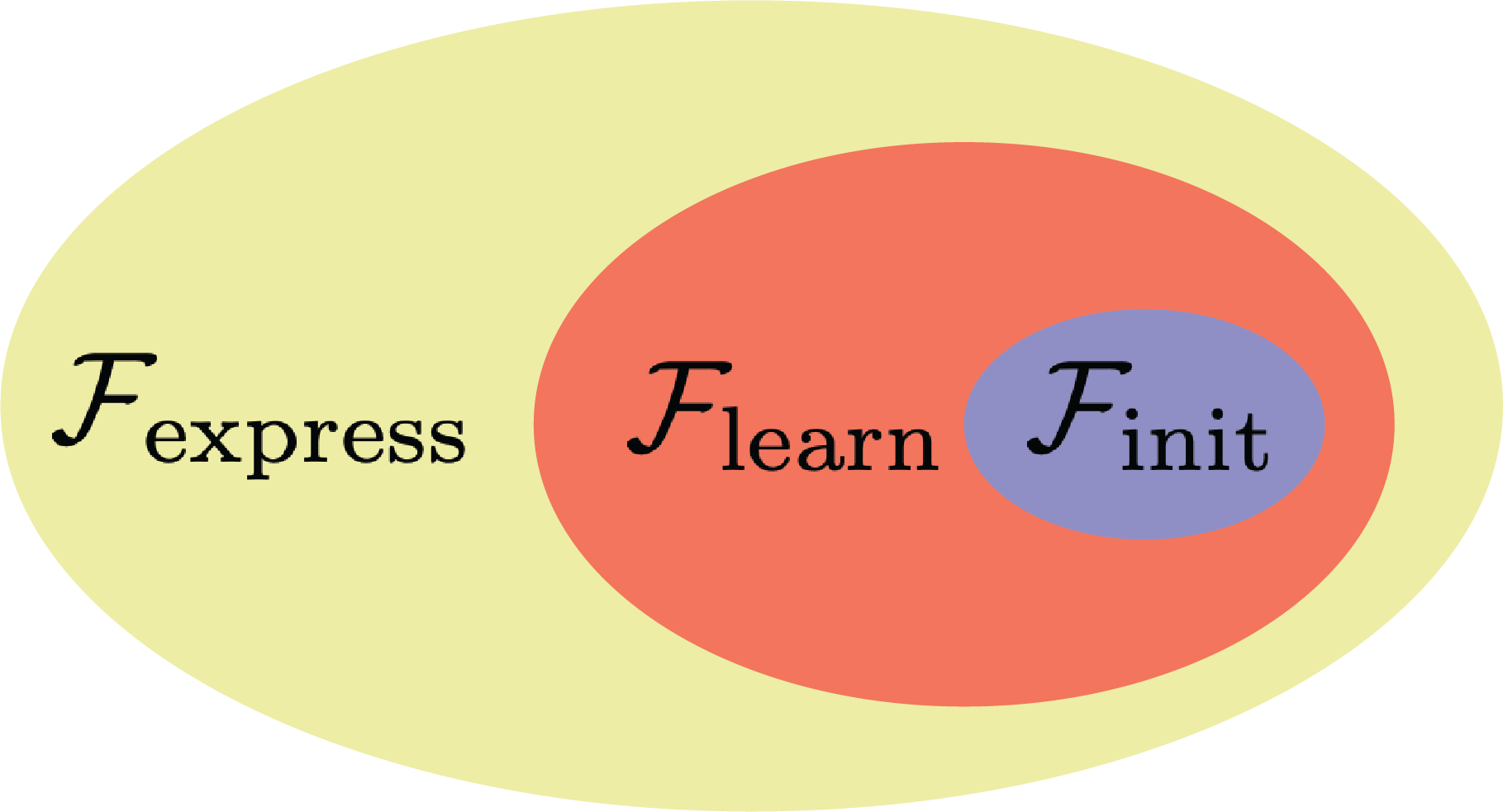}
\caption{
Schematic illustration of the space of functions $f:\,\R^{\nin}\to \R^{\nout}$. For a given neural network architecture, there is a set $\mF_\text{express}$ of functions expressible by that architecture. Within this set, the functions corresponding to networks at initialization are concentrated within a set $\mF_\text{init}$. Intermediate between $\mF_\text{init}$ and $\mF_\text{express}$ is a set $\mF_\text{learn}$ containing the functions which the network has non-vanishing probability of learning using gradient descent. (These are, of course, not formal definitions and are intended to provide intuition for our results.) This paper seeks to demonstrate the gap between $\mF_\text{express}$ and $\mF_\text{learn}$ and that, at least for certain measures of complexity, there is a surprisingly small gap between $\mF_{\text{init}}$ and $\mF_{\text{learn}}$.}
\label{fig:function_space}
\vspace{-1.5em}
\end{figure}

We aim to count the number of such activation regions. This number has been the subject of previous work (see \S\ref{sec:prior-work}), with the majority concerning large \textit{lower bounds} on the \textit{maximum} over all $\theta$ of the number of regions for a given network architecture. In contrast, we are interested in the typical behavior of ReLU nets as they are used in practice. We therefore focus on small \textit{upper bounds} for the \textit{average} number of activation regions present for a typical value of $\theta$. Our main contributions are:
\vspace{-0.5em}
\begin{itemize}
    \item We give precise definitions and prove several fundamental properties of both linear and activation regions, two concepts that are often conflated in the literature (see \S\ref{sec:think}).
    \item We prove in Theorem \ref{T:num-regions} an upper bound for the expected number of activation regions in a ReLU net $\mN$. Roughly, we show that if $\nin$ is the input dimension and $\mC$ is a cube in input space $\R^{\nin}$, then, under reasonable assumptions on network gradients and biases, 
\begin{equation}\label{E:main-result-informal}
   \frac{\#\mathrm{activation~regions~of~}\mN\mathrm{~that~intersect~}\mC}{\vol(\mC)}~\leq~ \frac{\lr{T\#\mathrm{neurons}}^{\nin}}{\nin!},\quad T\mathrm{~ constant}.
\end{equation}
 
    \item This bound holds in particular for deep ReLU nets at initialization, 
    and is in sharp contrast to the maximum possible number of activation patterns, which is exponential in depth \cite{raghu2017expressive, telgarsky2015representation}.
    \item Theorem \ref{T:num-regions} also strongly suggests that the bounds on number of activation regions continue to hold approximately throughout training. We empirically verify that this behavior holds, even for networks trained on memorization-based tasks (see \S\ref{sec:max} and Figures \ref{fig:count_2d}-\ref{fig:mem_lr}).
\end{itemize}
\begin{figure}[ht]
  \centering
\vspace{-1.2em}
\includegraphics[scale=0.38]{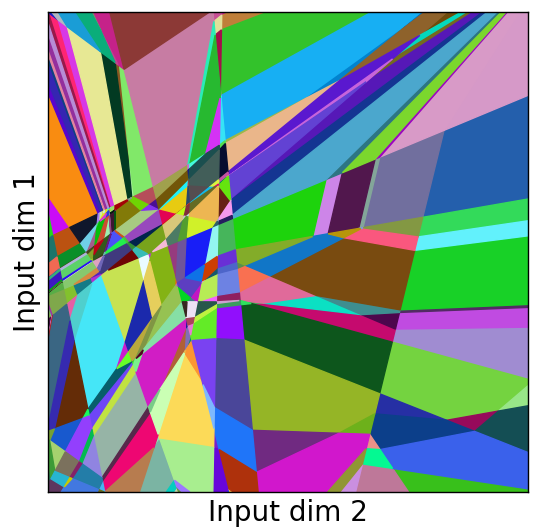}\hskip .8in
\includegraphics[scale=0.3]{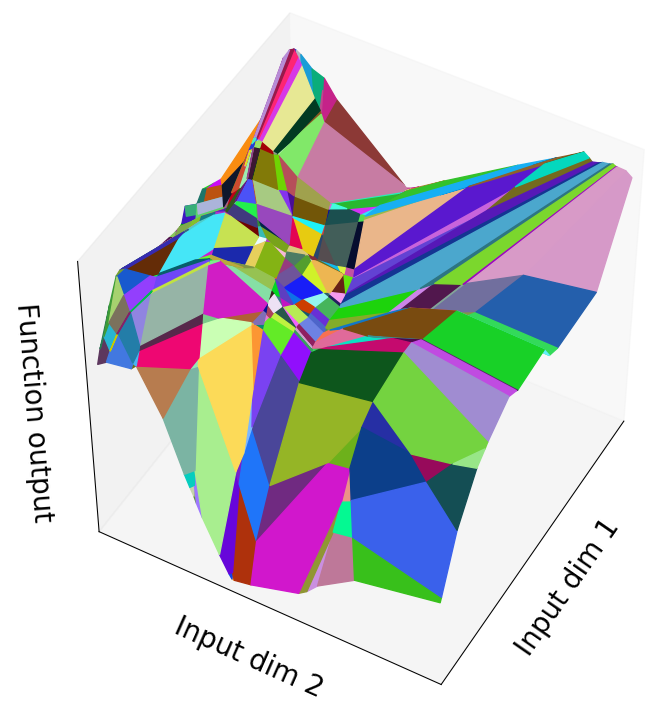}
\caption{Function defined by a ReLU network of depth 5 and width 8 at initialization. Left: Partition of the input space into regions, on each of which the activation pattern of neurons is constant. Right: the function computed by the network, which is linear on each activation region.}
\label{fig:regions}
\vspace{-0.5em}
\end{figure}

Our results show both theoretically and empirically not only that the number of activation patterns can be small for a ReLU network but that it typically \emph{is} small both at initialization and throughout training, effectively capped far below its theoretical maximum even for tasks where a higher number of regions would be advantageous (see \S\ref{sec:max}). We provide in \S \ref{sec:geometric-upper-bound}-\ref{sec:tight-upper-bound} two intuitive explanations for this phenomenon. The essence of both is that many activation patterns can be created only when a typical neuron $z$ in $\mN$ turns on/off repeatedly, forcing the value of its pre-activation $z(x)$ to cross the level of its bias $b_z$ many times. This requires (i) significant overlap between the range of $z(x)$ on the different activation regions of $x\mapsto z(x)$ and (ii) the bias $b_z$ to be picked within this overlap. Intuitively, (i) and (ii) require either large or highly coordinated gradients. In the former case, $z(x)$ oscillates over a large range of outputs and $b_z$ can be random, while in the latter $z(x)$ may oscillate only over a small range of outputs and $b_z$ is carefully chosen. Neither is likely to happen with a proper initialization. Moreover, both appear to be difficult to learn with gradient-based optimization.

The rest of this article is structured as follows.
Section \ref{sec:think} gives formal definitions and some important properties of both activation regions and the closely related notion of linear regions (see Definitions \ref{D:activation-regions} and \ref{D:linear-regions}).
 Section \ref{sec:region-bounds} contains our main technical result, Theorem \ref{T:num-regions}, stated in \S\ref{sec:formal-statement}. Sections \ref{sec:geometric-upper-bound} and \ref{sec:tight-upper-bound} provide heuristics for understanding Theorem \ref{T:num-regions} and its implications. Finally, \S\ref{sec:max} is devoted to experiments that push the limits of how many activation regions a ReLU network can learn in practice.


\vspace{-0.25cm}
\subsection{Relation to Prior Work}\label{sec:prior-work}
\vspace{-0.25cm}
We consider the typical number of activation regions in ReLU nets. In contrast, interesting bounds on the maximum number of regions are given in \cite{bianchini2014complexity,croce2018provable,montufar2014number,poole2016exponential, raghu2017expressive,serra2018empirical,serra2017bounding, telgarsky2015representation}. Such worst case bounds are used to control the VC-dimension of ReLU networks in \cite{ bartlett2019nearly,bartlett1999almost,harvey2017nearly} but differ dramatically from our average case analysis.

Our main theoretical result, Theorem \ref{T:num-regions}, is related to \cite{hanin2019complexity}, which conjectured that our Theorem \ref{T:num-regions} should hold and proved bounds for other notions of average complexity of activation regions. Theorem \ref{T:num-regions} is also related in spirit to \cite{de2018deep}, which uses a mean field analysis of wide ReLU nets to show that they are biased towards simple functions. Our empirical work (e.g.~\S\ref{sec:max}) is related both to the experiments of \cite{novak2018sensitivity} and to those of \cite{arpit2017closer, zhang2016understanding}. The last two observe that neural networks are capable of fitting noisy or completely random data. Theorem \ref{T:num-regions} and experiments in \S\ref{sec:max} give a counterpoint, suggesting limitations on the complexity of random functions that ReLU nets can fit in practice (see Figures \ref{fig:mem_gen}-\ref{fig:mem_lr}).


\begin{figure}[ht]
  \centering
\vspace{-0.5em}
\includegraphics[scale=0.27]{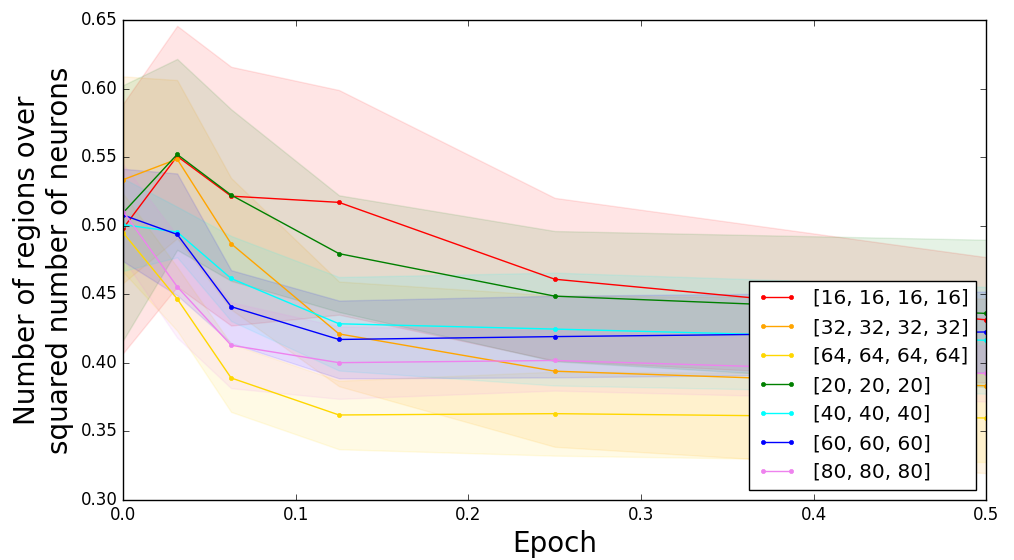}
\includegraphics[scale=0.27]{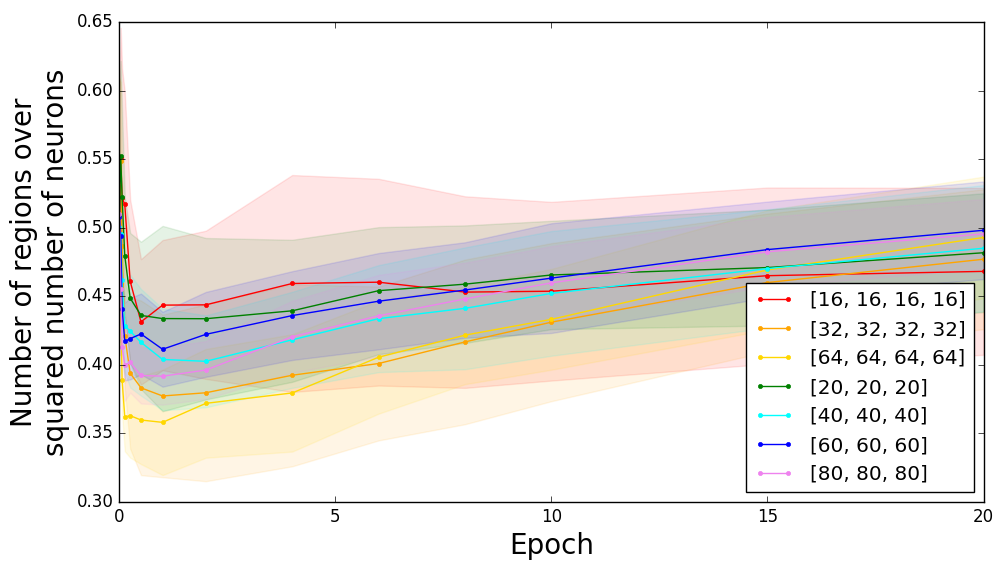}
\caption{The average number of activation regions in a 2D cross-section of input space, for fully connected networks of various architectures training on MNIST. Left: a closeup of 0.5 epochs of training. Right: 20 epochs of training. The notation $[20,20,20]$ indicates a network with three layers, each of width 20. The number of activation regions starts at approximately $(\#\mathrm{neurons})^2/2$, as predicted by Theorem \ref{T:num-regions} (see  Remark \ref{R:T-is-1}). This value changes little during training, first decreasing slightly and then rebounding, but never increasing exponentially. Each curve is averaged over 10 independent training runs, and for each run the number of regions is averaged over 5 different 2D cross-sections, where for each cross-section we count the number of regions in the (infinite) plane passing through the origin and two random training examples. Standard deviations between different runs are shown for each curve. See Appendix \ref{sec:exp_design} for more details.}
\label{fig:count_2d}
\vspace{-1em}
\end{figure}

\vspace{-0.15cm}
\section{How to Think about Activation Regions} \label{sec:think}
\vspace{-0.25cm}

Before stating our main results on counting activation regions in \S\ref{sec:region-bounds}, we provide a formal definition and contrast them with linear regions in \S\ref{sec:region-activation}. We also note in \S \ref{sec:region-activation} some simple properties of activation regions that are useful both for understanding how they are built up layer by layer in a deep ReLU net and for visualizing them. Then, in \S\ref{sec:hyperplanes}, we explain the relationship between activation regions and arrangements of bent hyperplanes (see Lemma \ref{L:broken-hyperplane}). 
 
\vspace{-0.25cm}
\subsection{Activation Regions vs.~Linear Regions }\label{sec:region-activation}
\vspace{-0.25cm}
Our main objects of study in this article are activation regions, which we now define. 
\begin{definition}[Activation Patterns/Regions]\label{D:activation-regions}\textit{
Let $\mN$ be a ReLU net with input dimension $\nin$. An activation pattern for $\mN$ is an assignment to each neuron of a sign:
\[\mA~:=~\set{a_z,~z \text{ a neuron in }\mN}\in \set{-1,1}^{\#\mathrm{neurons}}.\] 
Fix $\theta$, a vector of trainable parameters in $\mN$, and an activation pattern $\mA.$ The activation region corresponding to $\mA,\theta$ is
\[\mR(\mA;\theta) :=\set{x\in \R^{\nin}\quad|\quad (-1)^{a_z}\lr{z(x;\theta)-b_z}>0,\quad z \text{ a neuron in }\mN},\]
where neuron $z$ has pre-activation $z(x;\theta)$, bias $b_z,$ and post-activation $\max\set{0, z(x;\theta)-b_z}.$ We say the activation regions of $\mN$ at $\theta$ are the non-empty activation regions $\mR(\mA,\theta)$.}
\end{definition}

When specifying an activation pattern $\mathcal A$, the sign $a_z$ assigned to a neuron $z$ determines whether $z$ is on or off for inputs in the activation region $\mR(\mA;\theta)$ since the pre-activation $z(x;\theta)-b_z$ of neuron $z$ is positive (resp.~negative) when $a_z=1$ (resp.~$a_z=-1$). Perhaps the most fundamental property of activation regions is their convexity.
\begin{lemma}[Convexity of Activation Regions]\label{L:general}
Let $\mN$ be a ReLU net. Then for every activation pattern $\mA$ and any vector $\theta$ of trainable parameters for $\mN$ each activation region $\mR(\mA;\theta)$ is convex. 
\end{lemma}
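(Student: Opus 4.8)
The plan is to prove convexity by exhibiting each activation region as an intersection of convex sets, specifically as an intersection of half-spaces. The key observation is that on the region $\mR(\mA;\theta)$, the network behaves linearly in a controlled way: as long as we stay inside a fixed activation pattern, every neuron's pre-activation is an \emph{affine} function of the input $x$. I would proceed by induction on the depth (i.e.\ on the layers of $\mN$), tracking the claim that for inputs $x$ lying in $\mR(\mA;\theta)$, each pre-activation $z(x;\theta)$ agrees with a globally affine function $\ell_z(x)$ of $x$.

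First I would set up the induction. For a neuron $z$ in the first layer, the pre-activation $z(x;\theta)$ is by definition an affine function of the input $x$, so the defining inequality $(-1)^{a_z}(z(x;\theta)-b_z)>0$ cuts out an open half-space, which is convex. For the inductive step, suppose that for every neuron $w$ in layers before the layer containing $z$, the post-activation $\max\{0,\,w(x;\theta)-b_w\}$ agrees, \emph{on the set determined by the signs $a_w$ fixed so far}, with a single affine function of $x$: namely it equals $w(x;\theta)-b_w$ (affine) when $a_w=1$ and equals $0$ (affine) when $a_w=-1$. Since $z(x;\theta)$ is an affine function of the post-activations of the previous layer, composing an affine map with these affine expressions again yields an affine function $\ell_z(x)$ that agrees with $z(x;\theta)$ on the locus where all earlier signs match $\mA$. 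Hence the constraint imposed by $z$, namely $(-1)^{a_z}(\ell_z(x)-b_z)>0$, is again a half-space.

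With this in hand, the region $\mR(\mA;\theta)$ is exactly the set of $x$ satisfying all of these half-space constraints simultaneously, one per neuron, and a finite (or even arbitrary) intersection of convex sets is convex. I would emphasize that the subtlety is purely bookkeeping: the expression $z(x;\theta)$ is only piecewise affine as a function of $x$ globally, but once the activation pattern is \emph{fixed}, the relevant piece is selected consistently across all neurons, so each inequality is linear rather than merely piecewise linear. The clean way to phrase this is that, restricted to the closure of $\mR(\mA;\theta)$, the network map $x\mapsto\mN(x;\theta)$ is affine, and each neuron contributes one genuine half-space.

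The main obstacle, and the only place requiring care, is justifying that the affine representative $\ell_z$ is well-defined and that the inequalities really are linear rather than piecewise-linear: one must argue that fixing the signs $a_w$ of all \emph{upstream} neurons forces a single affine branch for each post-activation feeding into $z$, so that no hidden case-splitting reappears at layer of $z$. This is handled by the layer-by-layer induction above, where the inductive hypothesis guarantees exactly the consistency of affine branches needed. Once that is established, convexity is immediate from closure of convex sets under intersection.
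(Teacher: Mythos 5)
Your proof is correct and follows essentially the same route as the paper's: an induction on layers showing that, once the activation pattern of earlier layers is fixed, each subsequent pre-activation agrees with a single affine function on the region cut out so far, so each neuron contributes one convex constraint and the region is an intersection of convex sets. The paper's appendix proves the same statement in slightly greater generality (arbitrary piecewise linear activations, where each constraint is a slab $\{x : w\cdot x - b \in (s,t)\}$ rather than a half-space), but the decomposition and inductive mechanism are identical to yours.
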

We note that Lemma \ref{L:general} has been observed before (e.g.~Theorem 2 in \cite{raghu2017expressive}), but in much of the literature the difference between linear regions (defined below), which are not necessarily convex, and activation regions, which are, is ignored. It turns out that Lemma \ref{L:general} holds for \textit{any} piecewise linear activation, such as leaky ReLU and hard hyperbolic tangent/sigmoid. This fact seems to be less well-known (see Appendix \ref{sec:convexity-pf} for a proof). To provide a useful alternative description of activation regions for a ReLU net $\mN,$ a fixed vector $\theta$ of trainable parameters and neuron $z$ of $\mN$, define
\begin{equation}\label{E:Hz-def}
    H_z(\theta)~:=~\set{x\in \R^{\nin}~|~z(x;\theta)=b_z}.
\end{equation}
The sets $H_z(\theta)$ can be thought of as ``bent hyperplanes'' (see Lemma \ref{L:broken-hyperplane}). The non-empty activation regions of $\mN$ at $\theta$ are the connected components of $\R^{\nin}$ with all the bent hyperplanes $H_z(\theta)$ removed:
\begin{lemma}[Activation Regions as Connected Components]\label{L:activations-as-components}
For any ReLU net $\mN$ and any vector $\theta$ of trainable parameters
\begin{equation}
    \{\text{non-empty activation regions }\mR(\mA;\theta)\}~~=~~\mathrm{connected~components}\big(\R^{\nin}~\big \backslash~ \bigcup_{\mathrm{neurons~}z}H_z(\theta)\big).
\end{equation}
\end{lemma}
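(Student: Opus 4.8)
The plan is to identify the non-empty activation regions with the level sets of the sign-pattern map, and then to match those level sets with connected components. Write $\W := \R^{\nin} \setminus \bigcup_{z} H_z(\theta)$ for the complement of all the bent hyperplanes, and for $x \in \W$ let $\mA(x) := (\mathrm{sign}(z(x;\theta)-b_z))_z$ be the vector of signs of the pre-activations. First I would observe that every non-empty region lies in $\W$ and that these regions are exactly the non-empty fibers of $\mA$: if $x \in \mR(\mA;\theta)$ then each $z(x;\theta)-b_z$ is strictly signed, so $x$ avoids every $H_z(\theta)$; conversely every $x \in \W$ has a well-defined sign vector and hence lies in a unique region. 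Thus the non-empty $\mR(\mA;\theta)$ partition $\W$, and it remains only to show each such fiber is a single connected component of $\W$.

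The key step is that $\mA(\cdot)$ is locally constant on $\W$. Here I would use that each pre-activation $x \mapsto z(x;\theta)$ is a continuous function of $x$: it is built from the input by alternately applying affine maps and the (continuous) ReLU nonlinearity, so although its linear formula changes across the bent hyperplanes of earlier neurons, it is globally continuous. Consequently, for a fixed $x_0 \in \W$ and each neuron $z$ we have $z(x_0;\theta)-b_z \neq 0$, so by continuity the sign of $z(\cdot;\theta)-b_z$ is constant on some neighborhood of $x_0$. Because $\mN$ has finitely many neurons, intersecting these finitely many neighborhoods yields a neighborhood of $x_0$ on which the entire sign vector $\mA(\cdot)$ is constant. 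Hence each fiber $\mR(\mA;\theta)$ is open, and as a fiber of a locally constant map it is also closed in $\W$ (its complement is the union of the finitely many other, open, fibers); therefore it is a union of connected components of $\W$.

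Finally I would invoke Lemma \ref{L:general}: each non-empty $\mR(\mA;\theta)$ is convex, hence connected. A connected set that is a union of connected components of $\W$ must be exactly one such component, so every non-empty region is a single connected component. Conversely, since $\mA(\cdot)$ is constant on any connected component $U$ of $\W$, that $U$ is contained in a single fiber $\mR(\mA;\theta)$; but that fiber is itself a single component, forcing $U = \mR(\mA;\theta)$. This shows the two families of subsets coincide, as claimed. I expect the only real subtlety to be the local-constancy step, which rests on the global continuity of the pre-activations $z(x;\theta)$ despite their piecewise-linear, layer-dependent form, together with the finiteness of the neuron count used to combine the local neighborhoods; the identification of each fiber with a single component is then a formal consequence of convexity.
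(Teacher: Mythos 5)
Your proof is correct and follows essentially the same route as the paper's: both hinge on continuity of the pre-activations forcing the sign pattern to be constant on connected subsets of $\R^{\nin}\setminus\bigcup_z H_z(\theta)$, and on Lemma \ref{L:general} (convexity, hence connectedness, of each non-empty region) to conclude that each region is exactly one connected component. The only cosmetic difference is that the paper obtains constancy of the signs on each component from the fact that a continuous image of a connected set is connected, whereas you obtain it from local constancy of the sign map and the resulting clopen fibers; the two arguments are interchangeable.
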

\vspace{-.2cm}We prove Lemma \ref{L:activations-as-components} in Appendix \ref{sec:activations-as-components-pf}. We may compare activation regions with \textit{linear regions}, which are the regions of input space on which the network defines different linear functions.
\begin{definition}[Linear Regions] \label{D:linear-regions}\textit{
Let $\mN$ be a ReLU net with input dimension $\nin$, and fix $\theta$, a vector of trainable parameters for $\mN$. Define 
\[\mB_{\mN}(\theta)~:=~\set{x\in \R^{\nin}~|~ \nabla \mN\lr{\cdot\,;\theta}\text{ is discontinuous at }x}.\]
The linear regions of $\mN$ at $\theta$ are the connected components of input space with $\mB_{\mN}$ removed:
\[\mathrm{linear~regions}\lr{\mN,\theta}~=~\mathrm{connected~components}\lr{\R^{\nin}\backslash \mB_{\mN}(\theta)}.\]}
\end{definition}
\vspace{-.2cm}
Linear regions have often been conflated with activation regions, but in some cases they are different. This can, for example, happen when an entire layer of the network is zeroed out by ReLUs, leading many distinct activation regions to coalesce into a single linear region. It can also occur for non-generic configurations of weights and biases where the linear functions computed in two neighboring activation regions happen to coincide. However, the number of activation regions is always at least as large as the number of linear regions.

\begin{lemma}[More Activation Regions than Linear Regions] \label{L:linear-activation-regions}
Let $\mN$ be a ReLU net. For any parameter vector $\theta$ for $\mN,$ the number of linear regions in $\mN$ at $\theta$ is always bounded above by the number of activation regions in $\mN$ at $\theta.$ In fact, the closure of every linear region is the closure of the union of some number of activation regions. 
\end{lemma}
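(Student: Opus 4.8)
The plan is to establish both claims by relating the discontinuity set $\mB_{\mN}(\theta)$ of the gradient to the union of bent hyperplanes $\bigcup_z H_z(\theta)$, and then invoking Lemma \ref{L:activations-as-components}, which identifies the activation regions as the connected components of the complement of this union. The guiding observation is that $\mN(\cdot\,;\theta)$ is linear on each activation region: within a fixed activation region $\mR(\mA;\theta)$ every neuron has a constant sign $a_z$, so each post-activation is either identically zero or an affine function of the preceding layer's outputs, and hence $\mN(\cdot\,;\theta)$ is affine on $\mR(\mA;\theta)$, with $\nabla\mN$ constant there. Consequently $\nabla\mN$ can only be discontinuous across the boundaries between activation regions, which gives the containment $\mB_{\mN}(\theta)\subseteq \bigcup_z H_z(\theta)$.

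First I would make this containment precise and draw the inclusion of complements $\R^{\nin}\setminus\bigcup_z H_z(\theta)\subseteq \R^{\nin}\setminus\mB_{\mN}(\theta)$. The key point is that removing a smaller set produces a coarser partition into connected components: every connected component of $\R^{\nin}\setminus\bigcup_z H_z(\theta)$ is contained in a unique connected component of $\R^{\nin}\setminus\mB_{\mN}(\theta)$, since the former set is path-connected and avoids $\mB_{\mN}(\theta)$ as well. By Lemma \ref{L:activations-as-components} the components of the finer partition are exactly the activation regions and the components of the coarser partition are, by Definition \ref{D:linear-regions}, exactly the linear regions. This assignment of each activation region to the linear region containing it is well-defined, and it is surjective because every point of a linear region not lying on any $H_z(\theta)$ already lies in some activation region. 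Surjectivity of a map from activation regions onto linear regions immediately yields that the number of linear regions is at most the number of activation regions.

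For the second, sharper assertion, I would fix a linear region $L$ and examine the activation regions mapping to it. These are precisely the activation regions contained in $L$, and I claim their closures cover $\overline{L}$. Indeed, $L\setminus\bigcup_z H_z(\theta)$ is a dense open subset of $L$ (the bent hyperplanes are lower-dimensional, a fact that follows from the piecewise-linear structure made explicit via Lemma \ref{L:broken-hyperplane}), and it decomposes as the disjoint union of the activation regions inside $L$. Taking closures, $\overline{L}$ equals the closure of this union, which equals the union of the finitely many closures of those activation regions; this is the statement that $\overline{L}$ is the closure of a union of activation regions.

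The main obstacle I anticipate is the care needed in arguing that $\mB_{\mN}(\theta)$ is genuinely contained in $\bigcup_z H_z(\theta)$ rather than merely related to it, and in handling the degenerate configurations flagged in the text. When an entire layer is zeroed out, or when neighboring activation regions compute coincidentally identical affine maps, a bent hyperplane $H_z(\theta)$ need not belong to $\mB_{\mN}(\theta)$; this is exactly why the inclusion goes one way and the count is an inequality rather than an equality. I would therefore emphasize that I only need the \emph{one-sided} containment $\mB_{\mN}(\theta)\subseteq\bigcup_z H_z(\theta)$, which holds unconditionally because linearity of $\mN$ on each activation region forces $\nabla\mN$ to be continuous wherever no bent hyperplane is crossed. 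The equality of closures in the second claim then requires only that the activation regions inside $L$ tile a dense subset of $L$, which is robust to these degeneracies.
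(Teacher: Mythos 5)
Your proof follows essentially the same route as the paper's: you establish the one-sided containment $\mB_{\mN}(\theta)\subseteq\bigcup_z H_z(\theta)$ via local constancy of the activation pattern off the bent hyperplanes, map each activation region (a connected component of $\R^{\nin}\setminus\bigcup_z H_z(\theta)$, by Lemma \ref{L:activations-as-components}) into the linear region containing it, and deduce the closure statement from density of $\R^{\nin}\setminus\bigcup_z H_z(\theta)$ in $\R^{\nin}\setminus\mB_{\mN}(\theta)$ -- exactly the paper's two topological facts. The only caveat is that the density step (and hence surjectivity and the closure claim) is not fully ``robust to degeneracies'' as you assert: a neuron whose incoming weights vanish and whose bias equals its constant pre-activation makes $H_z(\theta)$ have nonempty interior, which is why the paper restricts this part of the argument to $\theta$ outside a Lebesgue-null set rather than claiming it unconditionally.
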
 
\vspace{-.25cm}
Lemma \ref{L:linear-activation-regions} is proved in Appendix \ref{sec:linear-activations-proof}. We prove moreover in Appendix \ref{sec:distinguishable-proof} that generically, the gradient of $\nabla \mN$ is different in the interior of  most activation regions and hence that most activation regions lie in different linear regions. In particular, this means that the number of linear regions is generically very similar to the number of activation regions. 

\vspace{-0.25cm}
\subsection{Activation Regions and Hyperplane Arrangements} \label{sec:hyperplanes}
\vspace{-0.25cm}
Activation regions in depth $1$ ReLU nets are given by hyperplane arrangements in $\R^{\nin}$ (see \cite{stanley2004introduction}). Indeed, if $\mN$ is a ReLU net with one hidden layer, then the sets $H_z(\theta)$ from \eqref{E:Hz-def} are simply hyperplanes, giving the well-known observation that the activation regions in a depth $1$ ReLU net are the connected components of $\R^{\nin}$ with the hyperplanes $H_z(\theta)$ removed. The study of regions induced by hyperplane arrangements in $\R^n$ is a classical subject in combinatorics \cite{stanley2004introduction}. A basic result is that for hyperplanes in general position (e.g. chosen at random), the total number of connected components coming from an arrangement of $m$ hyperplanes in $\R^n$ is constant: 
\begin{equation}\label{E:one-layer-regions}
    \#\mathrm{connected~components}~=~\sum_{i=0}^{n} \binom{m}{i}~\simeq~ \begin{cases} \frac{m^{n}}{n !},&\quad m\gg n\\
2^m,&\quad m \leq n\end{cases}.
\end{equation}
Hence, for random $w_j,b_j$ drawn from any reasonable distributions the number of activation regions in a ReLU net with input dimension $\nin$ and one hidden layer of size $m$ is given by \eqref{E:one-layer-regions}.
The situation is more subtle for deeper networks. By Lemma \ref{L:activations-as-components}, activation regions are connected components for an arrangement of ``bent'' hyperplanes $H_z(\theta)$ from \eqref{E:Hz-def}, which are only locally described by hyperplanes. To understand their structure more carefully, fix a ReLU net $\mN$ with $d$ hidden layers and a vector $\theta$ of trainable parameters for $\mN.$ Write $\mN_j$ for the network obtained by keeping only the first $j$ layers of $\mN$ and $\theta_j$ for the corresponding parameter vector. The following lemma makes precise the observation that the hyperplane $H_z(\theta)$ can only bend only when it meets a bent hyperplane $H_{\widehat{z}}(\theta)$ corresponding to some neuron $\widehat{z}$ in an earlier layer. 

\begin{lemma}[$H_z(\theta)$ as Bent Hyperplanes]\label{L:broken-hyperplane}
Except on a set of $\theta \in \R^{\#\mathrm{params}}$ of measure $0$ with respect to Lebesgue measure, the sets $H_z(\theta_1)$ corresponding to neurons from the first hidden layer are hyperplanes in $\R^{\nin}$. Moreover, fix $2\leq j\leq d.$ Then, for each neuron $z$ in layer $j$, the set $H_z(\theta_j)$ coincides with a single hyperplane in the interior of each activation region of $\mN_{j-1}$.
\end{lemma}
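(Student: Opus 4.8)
The plan is to reduce everything to one elementary observation: once the sign of every neuron is held fixed, a ReLU net computes an \emph{affine} function of its input, and $H_z(\theta)$ is nothing but a level set of a pre-activation. I would organize the argument around this fact.

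For the first hidden layer, each neuron $z$ has pre-activation $z(x;\theta_1)=\inprod{w_z}{x}$, which is affine in $x$ with weight vector $w_z$ read off directly from $\theta_1$. Hence $H_z(\theta_1)=\set{x\in\R^{\nin}\mid \inprod{w_z}{x}=b_z}$ is an affine hyperplane exactly when $w_z\neq 0$. The locus $\set{w_z=0}$ is a proper coordinate subspace of parameter space, hence Lebesgue-null, and the finite union over the neurons of the first layer is the exceptional measure-zero set in the statement.

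For $2\le j\le d$, I would fix $\theta$ and an activation region $R=\mR(\mA;\theta_{j-1})$ of $\mN_{j-1}$. By Definition \ref{D:activation-regions} the sign $a_z$ of every neuron in layers $1,\dots,j-1$ is constant on $R$, so on the interior of $R$ each such neuron acts through a \emph{fixed} branch of the ReLU: its post-activation equals $z(x)-b_z$ when $a_z=1$ and $0$ when $a_z=-1$. I would then induct on the layer index $1\le \ell\le j-1$ to show that every post-activation of layer $\ell$ is affine in $x$ on the interior of $R$. The base case $\ell=1$ is the previous paragraph; for the inductive step, a pre-activation in layer $\ell$ is an affine combination of the (affine, by hypothesis) post-activations of layer $\ell-1$, and applying the frozen ReLU branch preserves affineness. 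Since affine combinations and compositions of affine maps are again affine, the vector $y^{(j-1)}(x)$ of layer-$(j-1)$ post-activations satisfies $y^{(j-1)}(x)=Ax+c$ on the interior of $R$, for a matrix $A$ and vector $c$ depending only on $\mA$ and $\theta_{j-1}$.

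Finally, the pre-activation of a neuron $z$ in layer $j$ is an affine function of $y^{(j-1)}$, so $z(x;\theta_j)=\inprod{\widetilde w_z}{x}+\widetilde c$ with $\widetilde w_z=A^\top w_z$ is affine on the interior of $R$, and
\[
H_z(\theta_j)\cap \mathrm{int}(R)=\set{x\in \mathrm{int}(R)\mid \inprod{\widetilde w_z}{x}+\widetilde c=b_z}
\]
is the intersection of $\mathrm{int}(R)$ with a single affine hyperplane, which is the claim. The step needing genuine care is the inductive affineness assertion, since it is precisely the content of the lemma that freezing the pattern collapses the globally piecewise-linear map $\mN_{j-1}$ to an honest affine map on each region, forcing $H_z$ to be flat inside a region and to bend only across region boundaries. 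The only caveat is the degenerate case $\widetilde w_z=0$ (for instance when $\mA$ switches off an entire earlier layer), where the level set is empty or all of $\mathrm{int}(R)$ rather than a genuine hyperplane; this is a non-generic configuration that I would exclude together with the first-layer exceptions.
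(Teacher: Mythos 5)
Your core argument---freeze the activation pattern on a region, induct on layers to show every pre-activation is affine there, and read off $H_z$ as a level set of an affine function---is exactly the paper's route: the paper derives this lemma directly from the inductive proof of Lemma \ref{L:general-general}, whose content is precisely that affineness statement. The first-layer analysis and the induction itself are correct.

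The flaw is in your final caveat. You propose to dispose of the degenerate case $\widetilde w_z = 0$ by absorbing it into the measure-zero exceptional set of parameters, ``together with the first-layer exceptions.'' But that set of $\theta$ has \emph{positive} Lebesgue measure. For instance, on an open set of parameter vectors (e.g.\ whenever all first-layer biases are positive, so that a neighborhood of the origin lies in the all-off region), the activation region $R$ of $\mN_1$ on which every first-layer neuron is off is non-empty; on that region every layer-$2$ pre-activation is identically constant, hence $\widetilde w_z = 0$ for \emph{every} neuron $z$ in layer $2$, regardless of the weights. So these $\theta$ cannot be excluded. The correct repair is different, and it is the one the paper uses in the proof of Lemma \ref{L:local}: when $\widetilde w_z = 0$ on a region, $z$ is constant there, so $H_z(\theta_j)\cap \mathrm{int}(R)$ is empty \emph{unless} $b_z$ equals that constant exactly; for fixed weights only finitely many bias values are bad, so after excluding a genuinely measure-zero set of $\theta$ the intersection is empty, which is (vacuously) consistent with the lemma's conclusion. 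In short, what is measure zero is not the event $\{\widetilde w_z = 0 \text{ on some region}\}$ but the event that the bias additionally hits the constant value; your proof needs this distinction to go through.
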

Lemma \ref{L:broken-hyperplane}, which follows immediately from the proof of Lemma \ref{L:general-general} in Appendix \ref{sec:convexity-pf}, 
ensures that in a small ball near any point that does not belong to $\bigcup_z H_z(\theta)$, the collection of bent hyperplanes $H_z(\theta)$ look like an ordinary hyperplane arrangement. Globally, however, $H_z(\theta)$ can define many more regions than ordinary hyperplane arrangements. This reflects the fact that deep ReLU nets may have many more activation regions than shallow networks with the same number of neurons.

Despite their different extremal behaviors, we show in Theorem \ref{T:num-regions} that the average number of activation regions in a random ReLU net enjoys depth-independent upper bounds at initialization. We show experimentally that this holds throughout training as well (see \S\ref{sec:max}). On the other hand, although we do not prove this here, we believe that the effect of depth can be seen through the \textit{fluctuations} (e.g. the variance), rather than the mean, of the number of activation regions. For instance, for depth $1$ ReLU nets, the variance is $0$ since for a generic configuration of weights/biases, the number of activation regions is constant (see \eqref{E:one-layer-regions}). The variance is strictly positive, however, for deeper networks.

\vspace{-0.25cm}
\section{Main Result}\label{sec:region-bounds}
\vspace{-0.15cm}

\subsection{Formal Statement}\label{sec:formal-statement}
\vspace{-0.25cm} Theorem \ref{T:num-regions} gives upper bounds on the average number of activation regions per unit volume of input space for a feed-forward ReLU net with random weights/biases. Note that it applies even to highly correlated weight/bias distributions and hence holds throughout training. Also note that although we require no tied weights, there are no further constraints on the connectivity between adjacent layers. 

\begin{theorem}[Counting Activation Regions]\label{T:num-regions}
Let $\mN$ be a feed-forward ReLU network with no tied weights, input dimension $\nin$, output dimension $1,$ and random weights/biases satisfying:
\begin{enumerate}
    \item The vector of weights is a continuous random variable (i.e. has a density).
    \item The conditional distribution of any collection of biases given all weights and other biases is a continuous random variable (for identically zero biases, see Appendix \ref{sec:zero-bias}). 
    \item There exists $C_{\mathrm{grad}}>0$ so that for every neuron $z$ and each $m\geq 1$, we have
    \[\sup_{x\in \R^{\nin}}\E{\norm{\nabla z(x)}^m}~\leq ~ C_{\mathrm{grad}}^m.\]
   \vspace{-.4cm} \item There exists $C_{\mathrm{bias}}>0$ so that for any neurons $z_1,\ldots, z_k$, the conditional distribution of the biases $\rho_{b_{z_1},\ldots, b_{z_k}}$ of these neurons given all the other weights and biases in $\mN$ satisfies
   \[\sup_{b_1,\ldots, b_k\in \R}\rho_{b_{z_1},\ldots, b_{z_k}}(b_1,\ldots, b_k)~\leq ~C_{\mathrm{bias}}^k.\]
\end{enumerate}
\vspace{-.2cm}Then, there exists $\delta_0,T>0$ depending on $C_{\mathrm{grad}},C_{\mathrm{bias}}$ with the following property. Suppose that $\delta>\delta_0$. Then, for all cubes $\mC$ with side length $\delta$, we have
\begin{equation}\label{E:region-density}
    \frac{\E{\#\mathrm{non}\text{-}\mathrm{empty~activation~regions~of~}\mN \mathrm{~in~}\mC}}{\vol(\mC)}~\leq~\begin{cases}\lr{T\#\mathrm{neurons}}^{\nin}/\nin!&~~ \#\mathrm{neurons}\geq \nin\\
2^{\#\mathrm{neurons}}&~~ \#\mathrm{neurons} \leq \nin \end{cases}.
\end{equation}
Here, the average is with respect to the distribution of weights and biases in $\mN.$
\end{theorem}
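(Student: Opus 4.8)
The plan is to combine the description of activation regions as connected components of a bent-hyperplane arrangement (Lemma~\ref{L:activations-as-components}) with an integral-geometric estimate for the expected number of ``vertices'' of that arrangement. First I would establish a purely deterministic combinatorial bound. By Lemma~\ref{L:broken-hyperplane} the sets $H_z(\theta)$ look, near any point off $\bigcup_z H_z(\theta)$, like an ordinary affine hyperplane arrangement, so one can add the bent hyperplanes one neuron at a time and induct on $\nin$: each $H_z$ is itself an $(\nin-1)$-dimensional bent arrangement cut by the others, and adding it creates at most as many new regions as the number of pieces into which the other bent hyperplanes divide $H_z\cap\mC$. Unwinding the recursion bounds the number of regions meeting $\mC$ by the numbers of $k$-fold transverse intersections of the bent hyperplanes inside $\mC$, for $k=0,\dots,\nin$. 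The top-order term ($k=\nin$, the \emph{vertices}, i.e.\ points where some $\nin$ neurons simultaneously hit their biases) dominates once the cube is large: for $\delta>\delta_0$ its expectation controls the contributions of all lower-order intersection counts together with the regions clipped by $\partial\mC$, which is exactly where the hypothesis $\delta>\delta_0$ enters. When $\#\mathrm{neurons}\le\nin$ there are too few bent hyperplanes to form any vertex and the count reverts to the affine bound $2^{\#\mathrm{neurons}}$ of \eqref{E:one-layer-regions}, giving the second case of \eqref{E:region-density}.

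It then remains to bound the expected number of vertices, which I write as a sum over unordered $\nin$-tuples of neurons; there are $\binom{\#\mathrm{neurons}}{\nin}\le(\#\mathrm{neurons})^{\nin}/\nin!$ of them. Fixing a tuple $z_1,\dots,z_{\nin}$, I would estimate $\E{\#\set{x\in\mC:z_i(x)=b_{z_i}\ \forall i}}$ by a Kac--Rice / area-formula computation for the piecewise-linear map $Z(x)=(z_1(x),\dots,z_{\nin}(x))$, whose two ingredients are the density at $0$ of the random vector $Z(x)-b$, with $b=(b_{z_1},\dots,b_{z_{\nin}})$, and the conditional expectation of the Jacobian factor $\abs{\det(\nabla z_1(x),\dots,\nabla z_{\nin}(x))}$. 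For the density I would exploit the triangular dependence structure of a deep net: ordering the selected neurons by layer, the pre-activation $z_i$ depends on a bias $b_{z_j}$ only for $z_j$ in a strictly earlier layer, so the change of variables $(b_{z_1},\dots,b_{z_{\nin}})\mapsto Z(x)-b$ has a unipotent (identity plus strictly-lower-triangular) Jacobian of determinant $\pm1$. Hence this density equals the conditional joint bias density at the corresponding point, which is $\le C_{\mathrm{bias}}^{\nin}$ by assumption~4; assumptions~1--2 guarantee these densities exist and that the relevant intersections are almost surely transverse.

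For the Jacobian factor I would apply Hadamard's inequality $\abs{\det(\nabla z_1,\dots,\nabla z_{\nin})}\le\prod_{i=1}^{\nin}\norm{\nabla z_i}$ followed by the generalized H\"older inequality with all exponents equal to $\nin$, so that with the moment bound of assumption~3 taken at $m=\nin$ one gets $\E{\prod_{i=1}^{\nin}\norm{\nabla z_i(x)}}\le C_{\mathrm{grad}}^{\nin}$. Combining the two ingredients, each tuple contributes at most $(C_{\mathrm{grad}}C_{\mathrm{bias}})^{\nin}\vol(\mC)$, so the expected vertex count is at most $\big((\#\mathrm{neurons})^{\nin}/\nin!\big)(C_{\mathrm{grad}}C_{\mathrm{bias}})^{\nin}\vol(\mC)$. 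Absorbing the combinatorial constant from the first step into a single $T$ proportional to $C_{\mathrm{grad}}C_{\mathrm{bias}}$ and dividing by $\vol(\mC)$ yields \eqref{E:region-density}.

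I expect the main obstacle to be the rigorous decoupling of the gradient moments from the bias values in the Jacobian estimate. Conditioning on the event $Z(x)=b$ places a vertex exactly at $x$, which fixes the $\nin$ selected biases and thereby perturbs the activation pattern on which each $\nabla z_i(x)$ depends; since assumption~3 is an unconditional moment bound, one must show this conditioning inflates the Jacobian moments by at most a bounded factor that can be absorbed into $T$. The key points are that the gradients depend on the biases only through the discrete activation pattern and that only $\nin$ of the biases are constrained, so the conditional gradient moments can be controlled by the unconditional ones. The second delicate point is making the deterministic reduction fully rigorous for bent rather than affine hyperplanes --- in particular verifying that the lower-order intersection counts and the $\partial\mC$-clipped regions are genuinely subdominant per unit volume once $\delta>\delta_0$ --- which relies on the almost-sure transversality from assumptions~1--2 together with Lemma~\ref{L:broken-hyperplane}.
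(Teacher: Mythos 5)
Your high-level strategy coincides with the paper's: reduce the count of activation regions meeting $\mC$ to counts of points where several neurons simultaneously satisfy $z(x)=b_z$, then bound the expected number of such points using assumption 4 for the bias density (via exactly the triangular, determinant-$\pm 1$ change of variables you describe, since a pre-activation depends only on biases of strictly earlier layers) and assumption 3 for the Jacobian factor (Hadamard plus H\"older), with $T$ proportional to $C_{\mathrm{grad}}C_{\mathrm{bias}}$. That analytic estimate is precisely what the paper imports from the proof of Theorem 6 of \cite{hanin2019complexity}, stated there as a bound on $\E{\vol_{\nin-k}\lr{X_{\mN,k}\cap S}}$, where $X_{\mN,k}$ is the set of inputs at which exactly $k$ neurons vanish. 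The genuine gap is in your deterministic reduction. Unwinding your insertion/induction argument bounds the number of regions meeting $\mC$ by the numbers of connected \emph{pieces} of the $k$-fold intersections of the bent hyperplanes inside $\mC$ for $k<\nin$; these are positive-dimensional random sets, and a Kac--Rice/area-formula computation counts points, not connected components, so your vertex estimate gives no control over these lower-order terms. Your plan to dismiss them because the vertex term ``dominates once $\delta>\delta_0$'' is circular: you cannot know they are subdominant without first bounding their expectations, and your proposal contains no mechanism for doing so.

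The paper closes exactly this hole with convexity rather than domination. Since every activation region is convex (Lemma \ref{L:general}, and Lemma \ref{L:partial-region-convex} in the generalization), a region meeting $\mC$ intersects it in a nonempty bounded convex polytope, which has a vertex; with probability one every such vertex lies in $X_{\mN,k}\cap\mC_k$ for some $k$, where $\mC_k$ is the $k$-dimensional skeleton of the cube (interior vertices correspond to $k=\nin$, while regions clipped by a $k$-face of $\partial\mC$ produce vertices with $k<\nin$). By Lemma \ref{L:local} these intersections are almost surely discrete point sets, and each such point lies on the boundary of at most $2^k$ regions, so the region count is at most $\sum_{k}2^{k}\#\lr{X_{\mN,k}\cap\mC_k}$: only point counts ever appear, at every order $k$. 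Each expectation $\E{\#\lr{X_{\mN,k}\cap\mC_k}}$ is then converted to an expected codimension-$k$ volume of $X_{\mN,k}$ by a disjoint-$\ep$-balls argument (Lemma \ref{L:vol-lb} plus Fatou), yielding $T^{k}\binom{\#\mathrm{neurons}}{k}\vol_k(\mC_k)$ with $\vol_k(\mC_k)=\binom{\nin}{k}2^{\nin-k}\delta^{k}$; the hypothesis $\delta>\delta_0$ enters only in the final arithmetic, where $\delta>1/T$ lets the $k=\nin$ term absorb the rest into $\lr{T\#\mathrm{neurons}}^{\nin}/\nin!$. Your second flagged obstacle, decoupling the Jacobian from the bias conditioning, is real but is handled in \cite{hanin2019complexity} essentially by the layer-ordered sequential bias integration you sketch; the step your proposal cannot complete as written is the combinatorial one, and the fix is the skeleton-plus-convexity vertex count, not a domination argument.
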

\begin{remark}\label{R:T-is-1}
The heuristic of \S \ref{sec:tight-upper-bound} suggests the average number of activation patterns in $\mN$ over all of $\R^{\nin}$ is at most $\lr{\#\mathrm{neurons}}^{\nin}/\nin!$, its value for depth $1$ networks (see \eqref{E:one-layer-regions}). This is confirmed in our experiments (see Figures \ref{fig:count_2d}-\ref{fig:mem_lr}).
\end{remark}
We state and prove a generalization of Theorem \ref{T:num-regions} in Appendix \ref{sec:main-pf}.

Two constants $C_{\mathrm{grad}}$ and $C_{\mathrm{bias}}$ appear in Theorem \ref{T:num-regions}. At initialization when all biases are independent, the constant $C_{\mathrm{bias}}$ can be taken simply to be the maximum of the density of the bias distribution. In sufficiently wide fully connected ReLU nets at initialization, $C_{\mathrm{grad}}$ is bounded. Indeed, by Theorem 1 (and Proposition 2) in \cite{hanin2018products}, in a depth $d$ fully connected ReLU net $\mN$ with independent weights and biases whose marginals are symmetric around $0$ and satisfy $\Var[\mathrm{weights}]=2/\mathrm{fan\text{-}in}$, the constant $C_{\mathrm{grad}}$ has an upper bound depending only the sum $\sum_{j=1}^d 1/n_j$ of the reciprocals of the hidden layer widths of $\mN.$ For example, if the layers of $\mN$ have constant width $n$, then $C_{\mathrm{grad}}$ depends on the depth and width only via the aspect ratio $d/n$ of $\mN,$ which is small for wide networks. Moreover, its has been shown that both the backward pass \cite{hanin2018neural, hanin2018products} and the forward pass \cite{hanin2018start} in a fully connected ReLU net display significant numerical instability unless $\sum_{j=1}^d 1/n_j$ is small. Hence, at least in this simple setting, the constant $C_{\mathrm{grad}}$ is bounded as soon as the network is stable enough to begin training. However, there is no \textit{a priori} reason that the constants $C_{\mathrm{grad}}$ and $C_{\mathrm{bias}}$ must remain bounded during training. Our numerical experiments (see \S \ref{sec:max} and Figures \ref{fig:mem_gen}, \ref{fig:mem_all}, \ref{fig:regions}) indicate that even when training on corrupted or random data, however, the number of activation regions during and after training is typically comparable to its value at initialization. We believe, but have not verified all the details, that for very wide ReLU nets in which the neural tangent kernel governs the training dynamics \cite{dyer2019asymptotics,jacot2018neural,lee2019wide,yang2019scaling} the constant $C_{\mathrm{grad}}$ stays bounded throughout training. 

Below we present two kinds of intuition motivating the Theorem. First, in \S\ref{sec:geometric-upper-bound} we derive the upper bound \eqref{E:region-density} via an intuitive geometric argument. Then in \S\ref{sec:tight-upper-bound}, we explain why, at initialization, we expect the upper bounds \eqref{E:region-density} to have matching, depth-independent, lower bounds (to leading order in the number of neurons). This suggests that the average \textit{total} number of activation regions at initialization should be the same for any two ReLU nets with the same number of neurons (see \eqref{E:one-layer-regions} and Figure \ref{fig:count_2d}).

\vspace{-0.25cm}
\subsection{Geometric Intuition} \label{sec:geometric-upper-bound}
\vspace{-0.25cm}
We give an intuitive explanation for the upper bounds in Theorem \ref{T:num-regions}, beginning with the simplest case of a ReLU net $\mN$ with $\nin=1$. Activation regions for $\mN$ are intervals, and at an endpoint $x$ of such an interval the pre-activation of some neuron $z$ in $\mN$ equals its bias: i.e.~$z(x)=b_z$. Thus, 
\[\#\mathrm{activation~regions~of~}\mN\mathrm{~in~}[a,b]~\leq~1~+~\sum_{\mathrm{neurons~z}}\#\set{x\in [a,b]~|~ z(x)=b_z}.\]
Geometrically, the number of solutions to $z(x)=b_z$ for inputs $x\in I$ is the number of times the horizontal line $y=b_z$ intersects the graph $y=z(x)$ over $x\in I.$ A large number of intersections at a given bias $b_z$ may only occur if the graph of $z(x)$ has many oscillations around that level. Hence, since $b_z$ is random, the graph of $z(x)$ must oscillate many times over a large range on the $y$ axis. This can happen only if the total variation $\int_{x\in I}\abs{z'(x)}$ of $z(x)$ over $I$ is large. Thus, if $\abs{z'(x)}$ is typically of moderate size, we expect only $O(1)$ solutions to $z(x)=b_z$ per unit input length, suggesting
\[\E{\#\mathrm{activation~regions~of~}\mN\mathrm{~in~}[a,b]}~=~ O\lr{(b-a)\cdot  \#\mathrm{neurons}},\]
in accordance with Theorem \ref{T:num-regions} (cf.~Theorems 1,3 in \cite{hanin2019complexity}). When $\nin>1,$ the preceding argument, shows that density of 1-dimensional regions per unit length along any 1-dimensional line segment in input space is bounded above by the number of neurons in $\mN.$ A unit-counting argument therefore suggests that the density of $\nin$-dimensional regions per unit $\nin$-dimensional volume is bounded above by $\#\mathrm{neurons}$ raised to the input dimension, which is precisely the upper bound in Theorem \ref{T:num-regions} in the non-trivial regime where $\#\mathrm{neurons}\gg \nin.$

\vspace{-0.25cm}
\subsection{Is Theorem \ref{T:num-regions} Sharp?}\label{sec:tight-upper-bound}
\vspace{-0.25cm}
Theorem \ref{T:num-regions} shows that, on average, depth does not increase the \textit{local density} of activation regions. We give here an intuitive explanation of why this should be the case in wide networks on any fixed subset of input space $\R^{\nin}.$ Consider a ReLU net $\mN$ with random weights/biases, and fix a layer index $\ell\geq 1.$ Note that the map $x\mapsto x^{(\ell-1)}$ from inputs $x$ to the post-activations of layer $\ell-1$ is itself a ReLU net. Note also that in wide networks, the gradients $\nabla z(x)$ for different neurons in the same layer are only weakly correlated (cf.~e.g.~\cite{lee2017deep}). Hence, for the purpose of this heuristic, we will assume that the bent hyperplanes $H_z(\theta)$ for neurons $z$ in layer $\ell$ are independent. Consider an activation region $\mR$ for $x^{(\ell-1)}(x)$. By definition, in the interior of $\mR,$ the gradient $\nabla z(x)$ for neurons $z$ in layer $\ell$ are constant and hence the corresponding bent hyperplane from \eqref{E:Hz-def} inside $\mR$ is the hyperplane $\set{x\in \mR~|~\inprod{\nabla z}{x}=b_z}.$ This in keeping with Lemma \ref{L:broken-hyperplane}. The $2/\mathrm{fan\text{-}in}$ weight normalization ensures that for each $x$
\[\E{\partial_{x_i}\partial_{x_j}z(x)}~=~2\cdot\delta_{i,j}\quad \Rightarrow \quad \Cov[\nabla z(x)]~=~2\,\mathrm{Id}.\]
See, for example, equation (17) in \cite{hanin2018neural}. Thus, the covariance matrix of the normal vectors $\nabla z$ of the hyperplanes $H_z(\theta)\cap \mR$ for neurons in layer $\ell$ are \textit{independent of $\ell$}! This suggests that, per neuron, the average contribution to the number of activation regions is the same in every layer. In particular, deep and shallow ReLU nets with the same number of neurons should have the same average number of activation regions (see \eqref{E:one-layer-regions}, Remark \ref{R:T-is-1}, and Figures \ref{fig:count_2d}-\ref{fig:mem_lr}).

\vspace{-0.25cm}
\section{Maximizing the Number of Activation Regions}
\label{sec:max}
\vspace{-0.25cm}
\begin{figure}[ht]
  \centering
\vspace{-0.5em}
\includegraphics[scale=0.28]{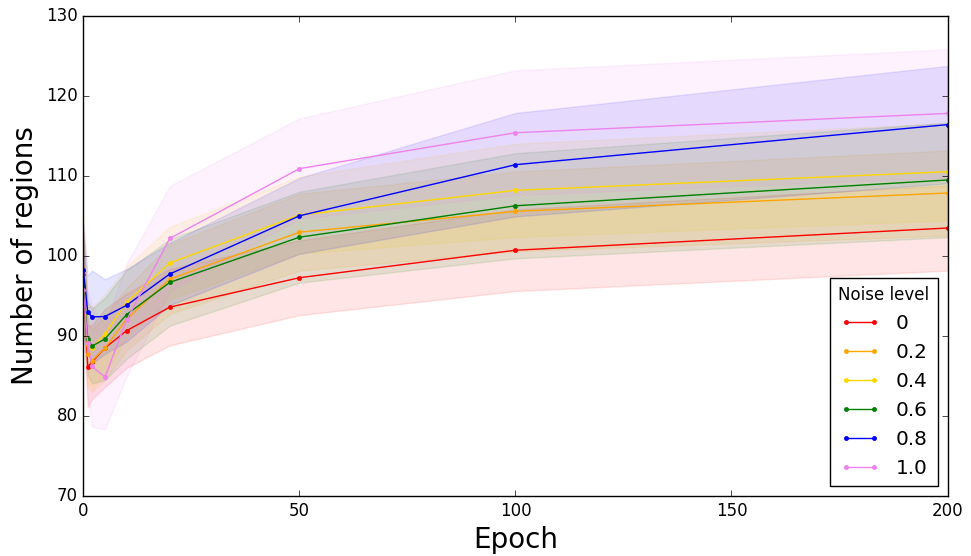}
\includegraphics[scale=0.28]{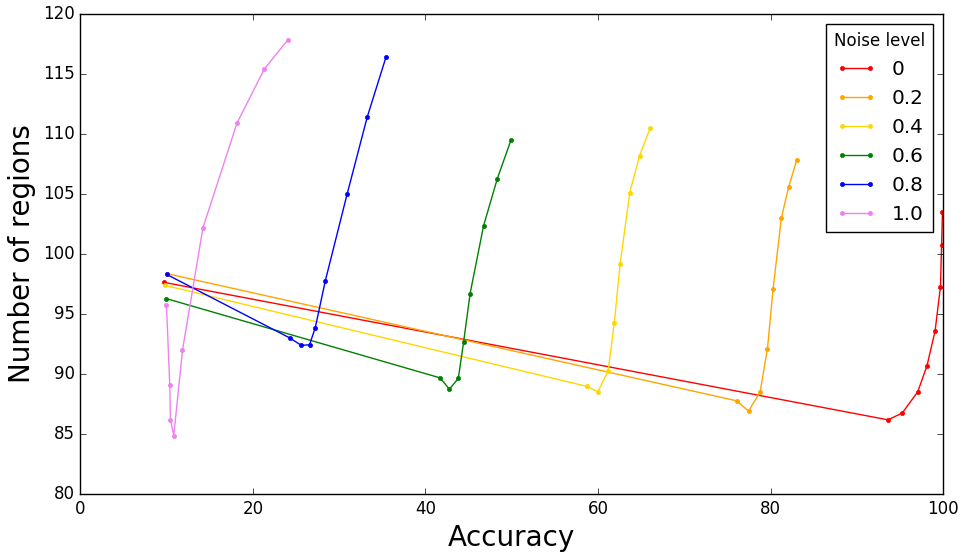}
\caption{Depth 3, width 32 network trained on MNIST with varying levels of label corruption. Activation regions are counted along lines through input space (lines are selected to pass through both the origin and randomly selected MNIST examples), with counts averaged across 100 such lines. Theorem \ref{T:num-regions} and \cite{hanin2019complexity} predict the expected number of regions should be approximately the number of neurons (in this case, 96). Left: average number of regions plotted against epoch. Curves are averaged over 40 independent training runs, with standard deviations shown. Right: average number of regions plotted against average training accuracy. Throughout training the number of regions is well-predicted by our result. There are slightly, but not exponentially, more regions when memorizing more datapoints. See Appendix \ref{sec:exp_design} for more details.}
\label{fig:mem_gen}
\vspace{-0.5em}
\end{figure}
While we have seen in Figure \ref{fig:count_2d} that the number of regions does not strongly increase during training on a simple task, such experiments leave open the possibility that the number of regions would go up markedly if the task were more complicated. Will the number of regions grow to achieve the theoretical upper bound (exponential in the depth) if the task is designed so that having more regions is advantageous? We now investigate this possibility. See Appendix \ref{sec:exp_design} for experimental details.
\vspace{-0.25cm}
\subsection{Memorization}
\label{subsec:mem}
\vspace{-0.25cm}
Memorization tasks on large datasets require learning highly oscillatory functions with large numbers of activation regions. Inspired by the work of Arpit et. al. in \cite{arpit2017closer}, we train on several tasks interpolating between memorization and generalization (see Figure \ref{fig:mem_gen}) in a certain fraction of MNIST labels have been randomized. We find that the maximum number of activation regions learned does increase with the amount of noise to be memorized, but only slightly. In no case does the number of activation regions change by more than a small constant factor from its initial value. Next, we train a network to memorize binary labels for random 2D points (see Figure \ref{fig:mem_all}). Again, the number of activation regions after training increases slightly with increasing memorization, until the task becomes too hard for the network and training fails altogether. 
Varying the learning rate yields similar results (see Figure \ref{fig:mem_lr}(a)), suggesting the small increase in activation regions is probably not a result of hyperparameter choice.
\begin{figure}[ht]
  \centering
\includegraphics[scale=0.27]{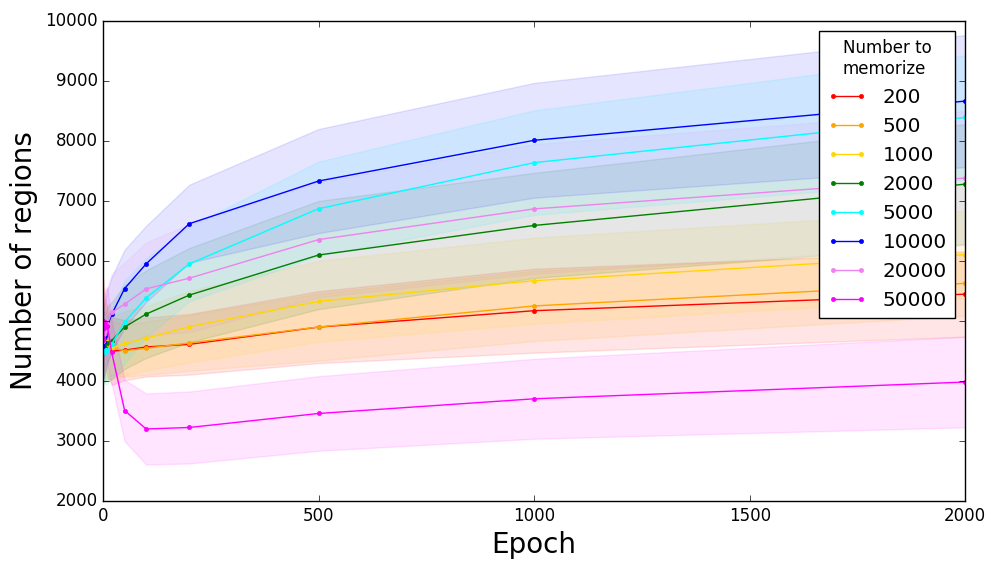}
\includegraphics[scale=0.27]{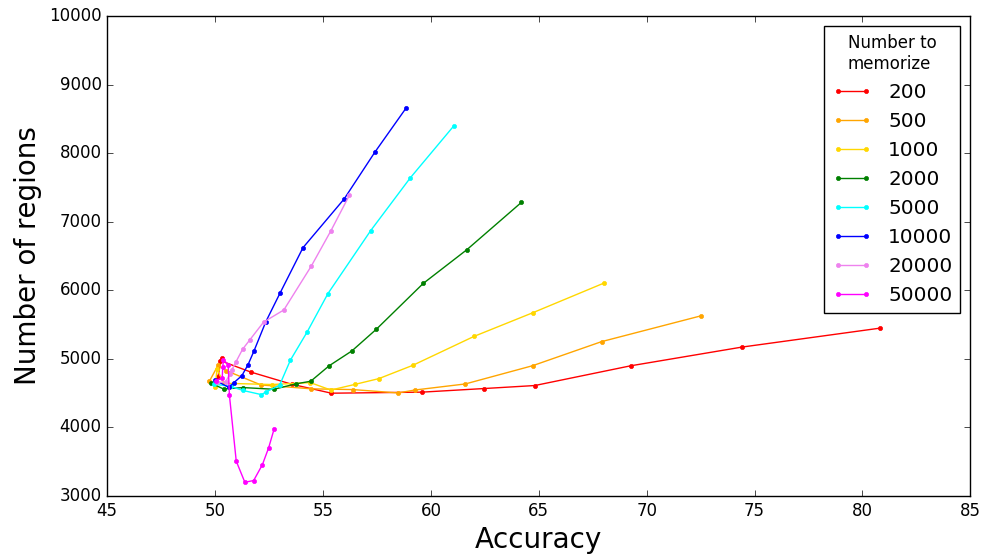}
\caption{Depth 3, width 32 fully connected ReLU net trained for 2000 epochs to memorize random 2D points with binary labels. The number of regions  predicted by Theorem \ref{T:num-regions} for such a network is $96^2 / 2! = 4608$. Left: number of regions plotted against epoch. Curves are averaged over 40 independent training runs, with standard deviations shown. Right: $\#\mathrm{regions}$ plotted against training accuracy. The number of regions increased during training, and increased more for greater amounts of memorization. The exception was for the maximum amount of memorization, where the network essentially failed to learn, perhaps because of insufficient capacity. See Appendix \ref{sec:exp_design} for more details.}
\label{fig:mem_all}
\vspace{-1em}
\end{figure}
\vspace{-0.25cm}
\subsection{The Effect of Initialization}
\vspace{-0.25cm}
We explore here whether varying the scale of biases and weights at initialization affects the number of activation regions in a ReLU net. Note that scaling the biases changes the maximum density of the bias, and thus affects the upper bound on the density of activation regions given in Theorem \ref{T:num-regions} by increasing $T_{\mathrm{bias}}$. Larger, more diffuse biases reduce the upper bound, while smaller, more tightly concentrated biases increase it. However, Theorem \ref{T:num-regions} counts only the \emph{local} rather than \emph{global} number of regions. The latter are independent of scaling the biases:
\begin{lemma}
Let $\mN$ be a deep ReLU network, and for $c > 0$ let $\mNb_c$ be the network obtained by multiplying all biases in $\mN$ by $c$. Then, $\mN(x) = \mNb_c(cx)/c$. Rescaling all biases by the same constant therefore does not change the total number of activation regions. 
\label{lem:scale_bias}
\end{lemma}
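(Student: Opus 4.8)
The plan is to reduce the whole statement to the single algebraic identity $\mN(x) = \mNb_c(cx)/c$, and then read off the claim about activation regions as an immediate corollary. The identity itself I would prove by induction on the layers, using only that the ReLU nonlinearity is positively homogeneous, i.e.\ $\max\set{0,ct} = c\max\set{0,t}$ for every $c>0$ and $t\in\R$, together with the fact (visible in Definition \ref{D:activation-regions}) that the pre-activation $z(x;\theta)$ is the purely \emph{linear} part of a neuron's input, with the bias $b_z$ subtracted only inside the ReLU.

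Concretely, write $x^{(\ell)}(x)$ for the vector of post-activations of layer $\ell$ of $\mN$ on input $x$, and $\widetilde{x}^{(\ell)}$ for the corresponding post-activations of $\mNb_c$. The inductive hypothesis I would carry is $\widetilde{x}^{(\ell)}(cx) = c\cdot x^{(\ell)}(x)$ for all $x$. The base case $\ell=0$ is just $\widetilde{x}^{(0)}(cx) = cx = c\cdot x^{(0)}(x)$. For the inductive step, fix a neuron $z$ in layer $\ell$; since $\mN$ and $\mNb_c$ share the same weights, the pre-activation of $z$ in $\mNb_c$ on input $cx$ is $\inprod{w_z}{\widetilde{x}^{(\ell-1)}(cx)} = c\,\inprod{w_z}{x^{(\ell-1)}(x)}$, exactly $c$ times the pre-activation of $z$ in $\mN$ on input $x$. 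Applying the ReLU and using $\widetilde{b}_z = c\,b_z$ with positive homogeneity gives post-activation $\max\set{0,\,c(z(x)-b_z)} = c\max\set{0,\,z(x)-b_z}$, which advances the induction. Reading the identity off at the affine output layer (whose output bias is likewise scaled by $c$) yields $\mNb_c(cx) = c\,\mN(x)$, i.e.\ $\mN(x)=\mNb_c(cx)/c$.

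For the region-counting conclusion I would note that for every neuron $z$ the signed pre-activation of $\mNb_c$ at input $cx$ is $z_{\mathrm{new}}(cx)-\widetilde{b}_z = c\,(z(x)-b_z)$, which (as $c>0$) has the same sign as $z(x)-b_z$. Hence $x$ and $cx$ realize the same activation pattern $\mA$, so $\mR(\mA;\theta_{\mNb_c}) = c\cdot\mR(\mA;\theta_{\mN})$; since $x\mapsto cx$ is a bijection of $\R^{\nin}$, a pattern yields a nonempty region for $\mN$ if and only if it does for $\mNb_c$, and the total counts agree. There is essentially no obstacle here beyond bookkeeping: the only point requiring care is the bias convention, namely checking that ``multiplying all biases by $c$'' is applied consistently to every bias (including the output bias) so that the scaling propagates cleanly through the final linear readout, and that the pre-activation carries no hidden affine term that would fail to scale.
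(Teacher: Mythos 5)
Your proof is correct and takes essentially the same approach the paper intends: the paper asserts the identity $\mN(x)=\mNb_c(cx)/c$ via a scaling argument without writing out details, and the layer-by-layer positive-homogeneity induction you give is exactly that argument (it mirrors the proof of Lemma \ref{lem:equivariant} and the observation $\mN(cx)=c\,\mNb_{1/c}(x)$ used for Proposition \ref{L:zero_bias} in the appendix). Your additional bookkeeping -- scaling the output bias and noting that $x\mapsto cx$ gives a bijection between nonempty activation regions with matching patterns -- correctly fills in what the paper leaves implicit.
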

\begin{figure}[ht]
  \centering
\vspace{-0.5em}
\includegraphics[scale=0.24]{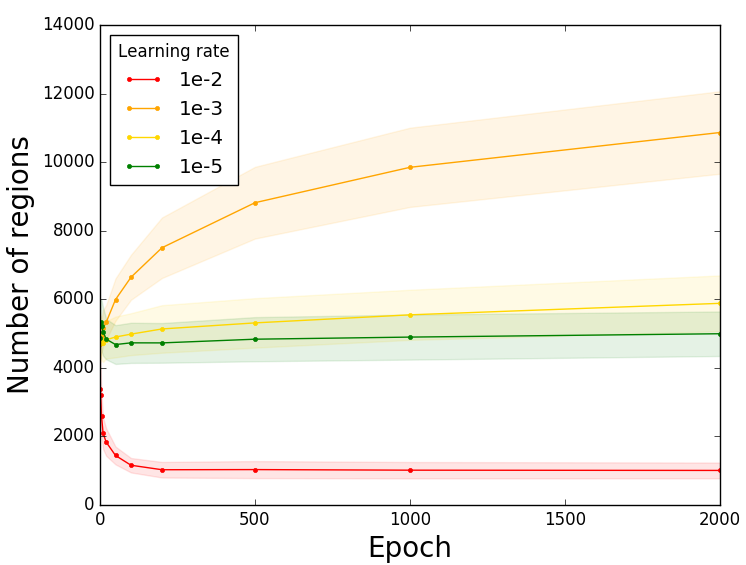}
\includegraphics[scale=0.24]{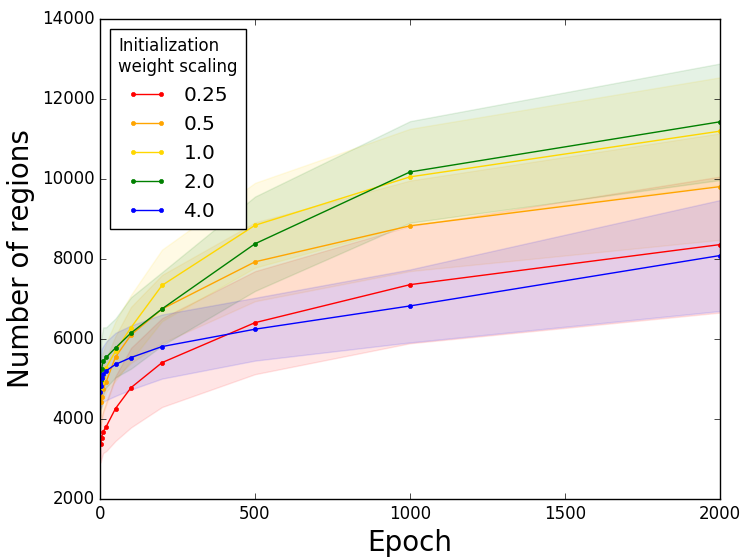}
\includegraphics[scale=0.24]{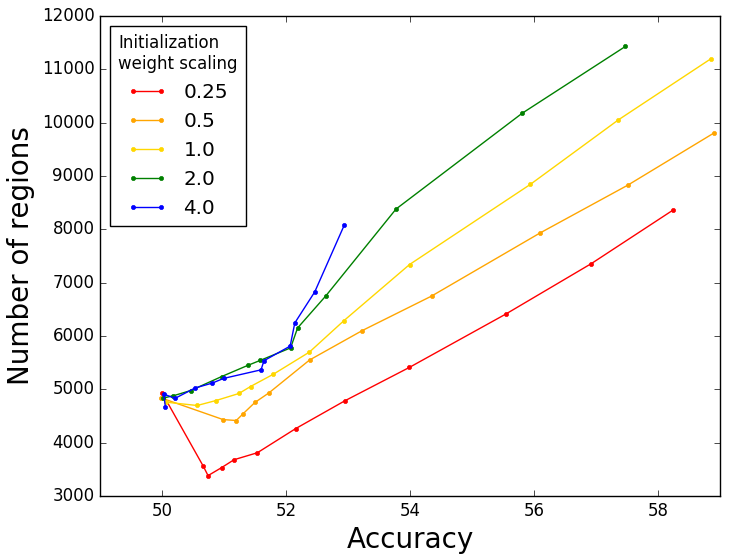}
\caption{Depth $3$, width $32$ network trained to memorize $5000$ random 2D points with independent binary labels, for various learning rates and weight scales at initialization. All networks start with $\approx 4608$ regions, as predicted by Theorem \ref{T:num-regions}. Left: None of the learning rates gives a number of regions larger than a small constant times the initial value. Learning rate $10^{-3}$, which gives the maximum number of regions, is the learning rate in all other experiments, while $10^{-2}$ is too large and causes learning to fail. Center: Different weight scales at initialization do not strongly affect the number of regions. All weight scales are given relative to variance $2/\mathrm{fan}\text{-}\mathrm{in}$. Right: For a given accuracy, the number of regions learned grows with the weight scale at initialization. However, poor initialization impedes high accuracy. See Appendix \ref{sec:exp_design} for details.}
\label{fig:mem_lr}
\end{figure}
\vspace{-.2cm} 
\begin{figure}[ht]
  \centering
\vspace{-0.8em}
\includegraphics[scale=0.79]{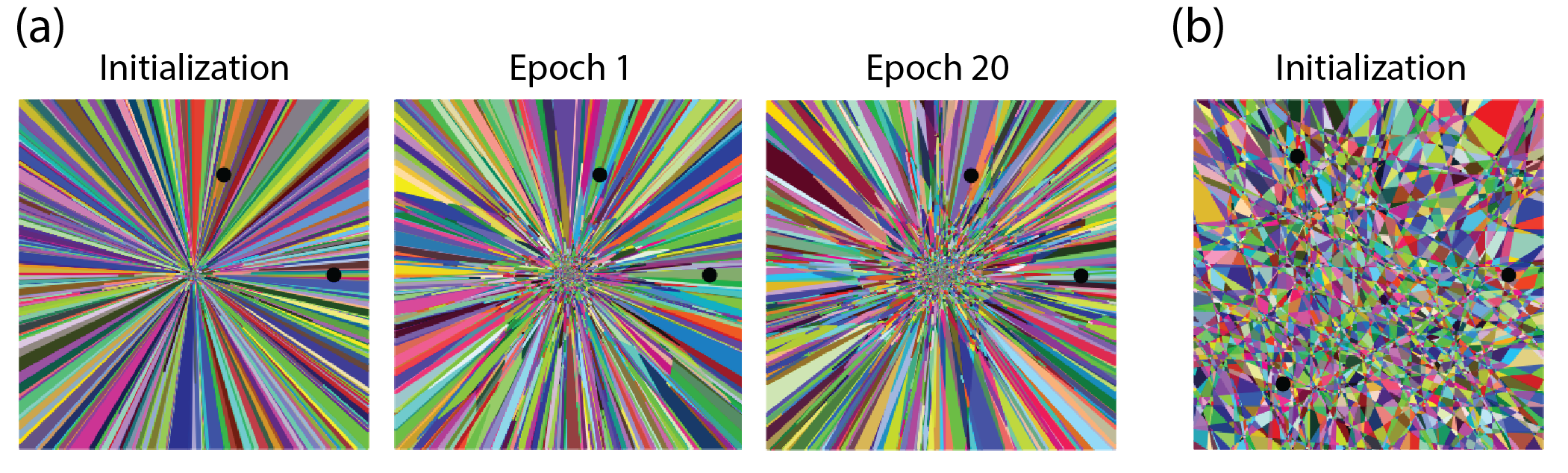}
\caption{Activation regions within input space, for a network of depth 3 and width 64 training on MNIST. (a) Cross-section through the origin, shown at initialization, after one epoch, and after twenty epochs. The plane is chosen to pass through two sample points from MNIST, shown as black dots. (b) Cross-section not through the origin, shown at initialization. The plane is chosen to pass through three sample points from MNIST. For discussion of activation regions at zero bias, see Appendix \ref{sec:zero-bias}.}
\label{fig:zero_bias}
\vspace{-1em}
\end{figure}

In the extreme case of biases initialized to zero, Theorem \ref{T:num-regions} does not apply. However, as we explain in Appendix \ref{sec:zero-bias}, zero biases only create fewer activation regions (see Figure \ref{fig:zero_bias}). We now consider changing the scale of weights at initialization. In \cite{raghu2017expressive}, it was suggested that initializing the weights of a network with greater variance should increase the number of activation regions. Likewise, the upper bound in Theorem \ref{T:num-regions} on the density of activation regions increases as gradient norms increase, and it has been shown that increased weight variance increases gradient norms \cite{hanin2018products}. However, this is again a property of the local, rather than global, number of regions.

Indeed, for a network $\mN$ of depth $d$, write $\mN_{c}^{\mathrm{weight}}$ for the network obtained from $\mN$ by multiplying all its weights by $c$, and let $\mN_{1/c\ast}^{\mathrm{bias}}$ be obtained from $\mN$ by dividing the biases in the $k$th layer by $c^k$. A scaling argument shows that $\mN_c^{\mathrm{weight}}(x) = c^d\mN_{1/c\ast}^{\mathrm{bias}}(x)$. We therefore conclude that the activation regions of $\mN_{c}^{\mathrm{weight}}$ and $\mN_{1/c\ast}^{\mathrm{bias}}$ are the same. Thus, scaling the weights uniformly is equivalent to scaling the biases differently for every layer. We have seen from Lemma \ref{lem:scale_bias} that scaling the biases uniformly by any amount does not affect the global number of activation regions. We conjecture that scaling the weights uniformly should approximately preserve the global number of activation regions.  We test this intuition empirically by attempting to memorize points randomly drawn from a 2D input space with arbitrary binary labels for various initializations (see Figure \ref{fig:mem_lr}). We find that neither at initialization nor during training is the number of activation regions strongly dependent on the weight scaling used for initialization.

\vspace{-0.25cm}
\section{Conclusion}
\vspace{-0.25cm}
We have presented theoretical and empirical evidence that the number of activation regions learned in practice by a ReLU network is far from the maximum possible and depends mainly on the number of neurons in the network, rather than its depth. This surprising result implies that, at least when network gradients and biases are well-behaved (see conditions 3,4 in the statement of Theorem \ref{T:num-regions}), the partition of input space learned by a deep ReLU network is not significantly more complex than that of a shallow network with the same number of neurons. We found that this is true even after training on memorization-based tasks, in which we expect a large number of regions to be advantageous for fitting many randomly labeled inputs. Our results are stated for ReLU nets with no tied weights and biases (and arbitrary connectivity). We believe that analogous results and proofs hold for residual and convolutional networks but have not verified the technical details.  While our results do not directly influence architecture selection for deep ReLU nets, they present the practical takeaway that the utility of depth may lie more in its effect on optimization than on expressivity.



\bibliographystyle{plain}
\bibliography{references}
\newpage
\appendix

\section{Experimental Design}
\label{sec:exp_design}
\vspace{-0.25cm}
We run several experiments that involve calculating  the activation regions intersecting a 1D or 2D subset of input space. In order to compute these regions, we add neurons of the network one by one from the first to last hidden layer, observing how each neuron cuts existing regions. Determining whether a region is cut by a neuron involves identifying whether the corresponding linear function on that region has zeros within the region. This can be solved easily by identifying whether all vertices of the region have the same sign (region is not cut) or some two of the vertices have different signs (region is cut). Thus, our procedure is to maintain a list of regions and the linear functions defined on them, then for each newly added neuron identify on which regions its preactivation vanishes and replace these regions by the resulting split regions. Note that this procedure also works in three dimensions and higher, but becomes slower as the number of regions in higher dimensions grows like the number of neurons to the dimension, as shown in Theorem \ref{T:num-regions}.

Unless otherwise specified, all experiments involving training a network were performed using an Adam optimizer, with learning rate $10^{-3}$ and batch size 128. Networks are, unless otherwise specified, initialized with i.i.d.~normal weights with variance $2/\mathrm{fan\text{-}in}$ (for justification of this initialization with ReLU networks, see \cite{hanin2018start, he2015delving}) and i.i.d.~normal biases with variance $10^{-6}$.

\section{Proofs of Various Lemmas}
\vspace{-0.25cm}

\subsection{Statement and Proof of Lemma \ref{L:general} for General Piecewise Linear Activations}\label{sec:convexity-pf}
\vspace{-0.25cm}
We begin by formulating Lemma \ref{L:general} for a general continuous piecewise linear function $\varphi:\R\gives \R$. For such a $\phi,$ there exists a non-negative integer $T\geq 0$, as well as \[-\infty=\xi_0~<~\xi_1~<~\cdots~<~\xi_T,\qquad p_0,q_0,\ldots, p_T,q_T\in\R,\, \quad \text{with }p_{i}\neq p_{i+1}\text{ for }\,i=0,\ldots,T-1\]
so that
\[\varphi(t)~=~p_i t + q_i,\quad t\in (\xi_{i},\xi_{i+1}].\]
\begin{definition}\label{D:general-activation-regions}\textit{
Let $\mN$ be a network with input dimension $\nin$ and non-linearity $\phi$. An activation pattern for $\mN$ assigns to each neuron an element of the alphabet $\set{0,\ldots, T}$:
\[\mA~:=~\set{a_z,~z \text{ a neuron in }\mN}\in \set{0,\ldots, T}^{\#\mathrm{neurons}}.\] 
Fix $\theta$, a vector of trainable parameters in $\mN$, and an activation pattern $\mA.$ The activation region corresponding to $\mA,\theta$ is
\[\mR(\mA;\theta) :=\set{x\in \R^{\nin}\quad|\quad z(x)-b_z \in (\xi_{a_z},\xi_{a_z+1}),\quad z \text{ a neuron in }\mN},\]
where the pre-activation of a neuron $z$ is $z(x;\theta)$, its bias is $b_z,$ and its post-activation is therefore $\varphi(z(x;\theta)-b_z).$ Finally, the activation regions of $\mN$ at $\theta$ is the collection of all non-empty activation regions $\mR(\mA,\theta)$.}
\end{definition}
We will prove the following generalization of Lemma \ref{L:distinguishable}.

\begin{lemma}[Activation Regions are Convex]\label{L:general-general}
Let $\mN$ be a network with non-linearity $\varphi$, and let 
\[\mA=\set{a_{j,z},\,z\text{ a neuron in }\mN}\in \set{0,\ldots,T}^{\#\mathrm{neurons}}\]
be any activation pattern. Then, for any vector $\theta$ of trainable parameters for $\mN,$ the region $\mR(\mA;\theta)$ is convex.
\end{lemma}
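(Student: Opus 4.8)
The plan is to prove convexity by induction on the layers of $\mN$, carrying along the stronger inductive statement that on the region $\mR(\mA;\theta)$ the pre-activation $z(x)=z(x;\theta)$ of every neuron $z$ agrees with a genuine affine function of the input $x$. Once this is known, the region is cut out by the constraints $\xi_{a_z}<z(x)-b_z<\xi_{a_z+1}$, which become ordinary linear inequalities in $x$; the region is then a finite intersection of open half-spaces and hence convex. Concretely, let $\mR_j(\mA;\theta)$ denote the set obtained by imposing the membership conditions $z(x)-b_z\in(\xi_{a_z},\xi_{a_z+1})$ only for neurons $z$ in the first $j$ hidden layers, so that $\mR(\mA;\theta)=\mR_d(\mA;\theta)$ with $d$ the number of hidden layers. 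I would show by induction on $j$ that $\mR_j(\mA;\theta)$ is convex and that on $\mR_j(\mA;\theta)$ the pre-activation of each neuron in layers $1,\ldots,j$ coincides with an affine function of $x$.

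For the base case $j=1$, any first-layer neuron has pre-activation $z(x)=\inprod{w_z}{x}+c_z$ for fixed weights $w_z$ and constant $c_z$, so it is globally affine; the defining conditions are therefore linear inequalities and $\mR_1(\mA;\theta)$ is a (possibly empty) convex polytope. For the inductive step, assume the claim for $j-1$. A neuron $z$ in layer $j$ has pre-activation equal to a fixed linear combination of post-activations $\varphi(\widehat z(x)-b_{\widehat z})$ of neurons $\widehat z$ in earlier layers (plus a constant). On $\mR_{j-1}(\mA;\theta)$ the pattern $\mA$ fixes $\widehat z(x)-b_{\widehat z}\in(\xi_{a_{\widehat z}},\xi_{a_{\widehat z}+1})$, so there $\varphi(\widehat z(x)-b_{\widehat z})=p_{a_{\widehat z}}(\widehat z(x)-b_{\widehat z})+q_{a_{\widehat z}}$ is affine in $\widehat z(x)$, which by the inductive hypothesis is affine in $x$. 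Hence $z(x)$ agrees on $\mR_{j-1}(\mA;\theta)$ with some affine $\ell_z(x)$, and we obtain $\mR_j(\mA;\theta)=\mR_{j-1}(\mA;\theta)\cap\bigcap_{z\text{ in layer }j}\set{\xi_{a_z}<\ell_z(x)-b_z<\xi_{a_z+1}}$, an intersection of a convex set with open half-spaces, hence convex. Taking $j=d$ gives the lemma, and specializing $\varphi$ to ReLU recovers Lemma \ref{L:general}.

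The one point requiring care — and the main obstacle to a naive one-line argument — is the apparent circularity: the pre-activations are affine only \emph{on} $\mR(\mA;\theta)$, not globally, yet $\mR(\mA;\theta)$ is itself defined through those very pre-activations. The layer-by-layer formulation resolves this, since we only substitute $\ell_z(x)$ for $z(x)$ after having already restricted to $\mR_{j-1}(\mA;\theta)$, on which the two functions agree, so the replacement is valid and the resulting constraints are genuinely linear in $x$. A secondary subtlety is purely bookkeeping: the pieces of $\varphi$ are defined on half-open intervals $(\xi_i,\xi_{i+1}]$ while the region uses open intervals $(\xi_{a_z},\xi_{a_z+1})$ (with the convention $\xi_{T+1}=+\infty$), but on each open interval $\varphi$ is unambiguously a single affine piece, so the boundary convention is harmless. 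Since nothing in the argument is specific to ReLU, it applies verbatim to leaky ReLU, hard tanh/sigmoid, and any continuous piecewise linear $\varphi$, as claimed.
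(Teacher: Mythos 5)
Your proof is correct and follows essentially the same route as the paper's: induction on depth, using the fact that the activation pattern fixes which affine piece of $\varphi$ each earlier neuron uses, so that on the region cut out by the first $j-1$ layers' constraints the layer-$j$ pre-activations are affine and the new constraints are open slabs intersected with a convex set. Your version is in fact slightly more careful than the paper's, since you make the affine-on-the-region property an explicit part of the induction hypothesis rather than asserting it in passing.
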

\begin{proof}
Write $d$ for the depth of $\mN$ and note that, by definition,
\[\mR\lr{\mA;\theta}~=~\bigcap_{\ell=1}^d\bigcap_{\substack{\mathrm{neurons~}z\\\mathrm{in~layer~}\ell}} \mR_{z}(\mA;\theta),\quad \mR_{z}(\mA;\theta)~:=~\set{x\in \R^{\nin}~|~ z(x)-b_z\in(-\xi_{a_z},\xi_{a_z+1})}.\]
We will show that $\mR(\mA,\theta)$ is convex by induction on $d.$ For the base case, note that given $s<t\in [-\infty,\infty]$, for every $w\in \R^{\nin},\, b\in \R$
\[\set{x\in \R^{\nin}~|~w\cdot x -b \in (s,t) }\]
is convex. Hence, the intersection of any number of such sets is convex as well. Note that when $d=1,$ $R_z(\mA;\theta)$ is of this form every $z$ proves the base case. For the inductive case, suppose we have shown the claim for some $d\geq 1$. For inputs $x$ in the $\bigcap_{\ell=1}^d\bigcap_{\substack{\mathrm{neurons~}z\\\mathrm{in~layer~}\ell}} \mR_{z}(\mA;\theta)$, there exists, for every neuron $z$ in layer $d+1$ a vector $w_z\in \R^{\nin}$ and a scalar $b_z\in \R$ so that
\[z(x)=w_z\cdot x - b.\]
Therefore, 
\[\bigcap_{\ell=1}^d\bigcap_{\substack{\mathrm{neurons~}z\\\mathrm{in~layer~}\ell}} \mR_{z}(\mA;\theta) ~\cap ~R_z(\A;\theta)~=~ \set{x\in\mR_d~|~w_z\cdot x - b\in (\xi_{a_z},\xi_{a_z+1})}\]
is the intersection of two convex sets and is therefore convex. Taking the intersection over all $z$ in layer $d+1$ completes the proof.
\end{proof}

\subsection{Proof of Lemma \ref{L:activations-as-components}}\label{sec:activations-as-components-pf}
\vspace{-0.25cm}
We claim the following general fact. Suppose $X$ is a topological space and $f_j:X\gives \R$ are continuous, $j=1,\ldots, J$. Then, on every connected component of $X\backslash \bigcup_j f_j^{-1}(0),$ the sign of $f$ is constant. Indeed, consider a connected component $X_\alpha.$ Since $f_j$ are never $0$ on $X_\alpha$ by construction, we have $f_j(X_\alpha)\subseteq (-\infty,0)\cup (0,\infty).$ But the image under a continuous map of a connected set is connected. Hence, for each $j$ $f_j(X_\alpha)\subseteq (-\infty, 0)$ or $f_j(X_\alpha)\subseteq (0,\infty),$ and the claim follows.

Turning to Lemma \ref{L:activations-as-components}, let $\mN$ be a ReLU net with input dimension $\nin$, and fix a vector of trainable parameters $\theta$ for $\mN$. The claim above shows that on every connected component of $\R^{\nin}~\big \backslash~ \bigcup_{\mathrm{neurons~}z}H_z(\theta)$ the functions $z(x)-b_z$ have a definite sign for all neurons $z$. Thus, every connected component is contained in some activation region $\mR(\mA;\theta).$ Finally, by construction,
\[ \mR\lr{\mA;\theta}~\subset~\R^{\nin}~\big \backslash~ \bigcup_{\mathrm{neurons~}z}H_z(\theta)\]
and, by Lemma \ref{L:general}, $\mR\lr{\mA;\theta}$ is convex and hence connected. Therefore it is equal to the connected component we started with. \hfill$\square$
\subsection{Proof of Lemma \ref{L:linear-activation-regions}}\label{sec:linear-activations-proof}
\vspace{-0.25cm}
Let $\mN$ be a ReLU net, and fix a vector $\theta$ of its trainable parameters. Let us first check that
\begin{equation}\label{E:linear-act-bound}
    \#\mathrm{linear~regions}\lr{\mN,\theta}~\leq~ \# \text{ non-empty activation regions }\mR(\mA;\theta).
\end{equation}
We will use the following simple fact: if $X,Y$ are subsets of a topological space $T$, then $X\subset Y$ implies that every connected component of $X$ is the subset of some connected component of $Y$. Indeed, if $X_\alpha$ is a connected component of $X$, then it is a connected subjset of $Y$ and hence is contained in a unique connected component of $Y.$

This fact shows that the cardinality of the set of connected components of $Y$ is bounded above by the cardinality of the set of connected components for $X.$ Using Lemma \ref{L:activations-as-components}, the inequality \eqref{E:linear-act-bound} therefore reduces to showing that
\begin{equation}\label{E:inclusion-1}
   \mB_{\mN}(\theta)~\subseteq~\bigcup_{\mathrm{neurons~}z} H_z(\theta),\quad H_z(\theta)=\set{x\in \R^{\nin}~|~z(x)=b_z}. 
\end{equation}
Fix any $x$ in the complement of the right hand side. By definition, we have 
\[\abs{z(x)-b_z}>0,\qquad \forall~ \mathrm{neurons~}z.\]
The functions $z(x)$ are continuous. Hence, in a small neighborhood of $x$, the estimate in the previous line holds in a small neighborhood of $x.$ Thus, in an open neighborhood of $x$, the collection of neurons that are on and off are constant. The inclusion \eqref{E:inclusion-1} now follows by observing that if for all $y$ in an open neighborhood $U$ of $x\in \R^{\nin}$, the sets
\[\mA_{y,\pm}(\theta)~:=~\set{\mathrm{neurons~}z~|~ \pm \lr{z(y;\theta)-b_z}>0}\]
are constant and 
\[\mA_{y,0}(\theta)~:=~\set{\mathrm{neurons~}z~|~  z(y;\theta)=b_z}~=~\emptyset,\] 
then $\mN$ restricted to $U$ is given by a single linear function and hence has a continuous gradient $\nabla \mathcal N\lr{\cdot\,;\theta}$ on $U.$

It remains to check that the closure of every linear region of $\mN$ at $\theta$ is the closure of the union of some activation regions of $\mN$ at $\theta.$ Note that, except on a set of $\theta\in \R^{\mathrm{params}}$ with Lebesgue measure $0$, both $\bigcup_z H_z(\theta)$ and $\mB_{\mN}(\theta)$ are co-dimension $1$ piecewise linear submanifolds of $\R^{\nin}$ with finitely many pieces. Thus, $\R^{\nin}\backslash \bigcup_z H_z(\theta)$ is open and dense in $\R^{\nin}\backslash \mB_{\mN}(\theta)$. And now we appeal to a general topological fact: if $X\subset Y$ are subsets of a topological space $T$ and $X$ is open and dense in $Y$, then the closure in $T$ of every connected component of $Y$ is the closure of the union of some connected components of $X$. Indeed, consider a connected component $Y_\beta$ of $Y.$ Then $X\cap Y_\beta$ is open and dense in $Y_\beta.$ On the other hand, $X\cap Y_\beta$ is the union of connected components of $X_\alpha$ of $X.$ Thus, the closure of this union is the closure of $X\cap Y_\beta,$ namely the closure of $Y_\beta.$\hfill $\square$
 
\subsection{Distinguishability of Activation Regions}\label{sec:distinguishable-proof}
\vspace{-0.25cm}

In addition to being convex, activation regions for a ReLU net $\mN$ generically correspond to different linear functions:

\begin{lemma}[Activation Regions are Distinguishable]\label{L:distinguishable}
Let $\mN$ be a ReLU net, and let 
\[\mA_{j}=\set{a_{j,z},\,z\text{ a neuron in }\mN}\in \set{-1,1}^{\#\mathrm{neurons}}\quad j=1,2\]
be two activation patterns with $\mR(\mA_j;\theta)\neq \emptyset.$ Suppose also that, for every layer and each $j=1,2$ there exists a neuron $z_j$ with $a_{j,z_j}=1$. Then, except on a measure zero set of $\theta$ with respect to Lebesgue measure in the parameter space $\R^{\#\mathrm{params}}$, the gradient $\nabla \mN(x;\theta)$ is different for $x$ in the interiors of $\mR\lr{\mA_j;\theta}$. 
\end{lemma}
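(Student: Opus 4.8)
The plan is to use that $\mR(\mA_j;\theta)$ is defined by strict inequalities, hence open and equal to its interior, and that on it $\mN(\cdot;\theta)$ is a single linear function, so $\nabla \mN(x;\theta)$ is a constant vector $g(\mA_j;\theta)\in\R^{\nin}$ there (the biases drop out under $\nabla_x$). The conclusion "different gradients on the interiors" then reduces to $g(\mA_1;\theta)\neq g(\mA_2;\theta)$, and it suffices to show this holds off a null set. First I would write $g(\mA;\theta)$ explicitly: if $W^{(1)},\dots,W^{(d+1)}$ are the weight matrices (with $W^{(d+1)}$ of shape $1\times\cdot$ since $\nout=1$) and $D^{(\ell)}(\mA)$ is the diagonal $0$--$1$ matrix recording which neurons of layer $\ell$ are on in $\mA$, then $g(\mA;\theta)=W^{(d+1)}D^{(d)}(\mA)W^{(d)}\cdots D^{(1)}(\mA)W^{(1)}$. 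Each coordinate is a polynomial in the weight entries, so $P_{\mA_1,\mA_2}:=g(\mA_1;\cdot)-g(\mA_2;\cdot)$ is a polynomial map on $\R^{\#\mathrm{params}}$. Its zero set is contained in the zero set of any single nonzero component, which is Lebesgue-null; since there are only finitely many pairs of patterns and a finite union of null sets is null, it suffices to show, for each fixed distinct pair $(\mA_1,\mA_2)$, that some component of $P_{\mA_1,\mA_2}$ is not the zero polynomial.

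Second, I would expand the matrix product as a sum over directed paths. The coordinate of $g(\mA;\theta)$ along input index $i_0$ equals $\sum_\gamma \prod_{e\in\gamma}W_e$, where $\gamma$ ranges over paths $i_0\to i_1\to\cdots\to i_d\to\mathrm{out}$ whose intermediate neurons $i_1,\dots,i_d$ are all on in $\mA$, and $W_e$ is the weight on edge $e$. The key point, and the only place the no-tied-weights hypothesis enters, is that distinct paths use distinct weight variables and therefore contribute distinct (square-free) monomials. Consequently the monomials appearing in $P_{\mA_1,\mA_2}$ are exactly the paths active in \emph{exactly one} of $\mA_1,\mA_2$, each with coefficient $+1$ or $-1$ and with no cancellation. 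Thus $P_{\mA_1,\mA_2}\not\equiv 0$ if and only if some source-to-output path is active in exactly one pattern.

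Third, I would construct such a path. Let $\ell^*$ be the first layer in which $\mA_1$ and $\mA_2$ disagree, and choose a neuron $z^*$ in layer $\ell^*$ with, say, $a_{1,z^*}=1$ and $a_{2,z^*}=-1$. I extend $z^*$ to a full path: backward through layers $1,\dots,\ell^*-1$, where the two patterns agree and the hypothesis (an on-neuron per layer) supplies an on-neuron in each layer, and forward through layers $\ell^*+1,\dots,d$, where the hypothesis again supplies an $\mA_1$-on-neuron in each layer, terminating at the output. Every neuron on this path is on in $\mA_1$, so the path is active in $\mA_1$; but it passes through $z^*$, which is off in $\mA_2$, so it is inactive in $\mA_2$. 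Its monomial therefore survives in $P_{\mA_1,\mA_2}$, giving $P_{\mA_1,\mA_2}\not\equiv 0$ and completing the reduction.

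I expect the main obstacle to be \emph{connectivity}: realizing this path requires that the on-neurons chosen in consecutive layers actually be joined by edges present in the architecture. For fully connected nets this is automatic, but in the arbitrary-connectivity setting of Theorem~\ref{T:num-regions} one must verify that the guaranteed on-neurons can be chained into an admissible path through $z^*$ (equivalently, that $z^*$ lies on some source-to-sink path of $\mA_1$-on-neurons); tracking reachability is exactly what the layer-activity hypothesis is meant to secure, and this is where I would spend the most care. A secondary, routine point is to confirm that the exceptional null set is compatible with $\mR(\mA_j;\theta)\neq\emptyset$: since each region is open and carries a constant gradient, "different for $x$ in the interiors" collapses cleanly to $g(\mA_1;\theta)\neq g(\mA_2;\theta)$, requiring no further analysis in $x$.
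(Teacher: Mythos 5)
Your proposal is correct and follows essentially the same route as the paper's proof: expand the gradient on each region as a signed sum of square-free weight monomials over open paths, observe that equality of the two gradients forces a polynomial identity $\sum_{\gamma\in\Gamma}a_\gamma\prod_j w_\gamma^{(j)}=0$ with $\Gamma\neq\emptyset$, and conclude because the zero set of a nonzero polynomial is Lebesgue-null and there are only finitely many pattern pairs. Your writeup is in fact more complete than the paper's, which asserts $\Gamma\neq\emptyset$ without justification: you explicitly construct a path open in exactly one pattern using the first layer of disagreement plus the layer-wise on-neuron hypothesis, and you rightly flag that this step (and hence the lemma) implicitly assumes enough connectivity, e.g.\ fully connected adjacent layers.
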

\begin{proof}
Fix an activation pattern $\mA$ for a depth $d$ ReLU network $\mN$ with $\mR(\mA;\theta)\neq \emptyset$. Fix $x\in \mR\lr{\mA;\theta}.$ We will use the following well-known formula:
\[\nabla \mN(x;\theta)~=~\sum_{\substack{\text{paths }\gamma}} {\bf 1}_{\set{\gamma \text{ open at }x}}\prod_{i=1}^d w_\gamma^{(j)},\]
where the sum is over all paths in the computational graph of $\mN,$ a path is open at $x$ if every neuron $z$ in $\gamma$ satisfies $z(x)-b_z~>~0$, and $w_\gamma^{(j)}$ is the weight on the edge of $\gamma$ between layers $j-1$ and $j.$ If there exist two different, non-empty activation regions corresponding to activation patterns $\mA$ for which there is at least one open path through the network on which $\nabla N(\cdot\,;\theta)$ has the same value on , then there exists $a_\gamma\in \set{\pm 1}$ and a non-empty collection $\Gamma$ of paths so that 
\begin{equation}\label{E:polynomial-vanish}
    \sum_{\gamma\in \Gamma} a_\gamma \prod_{i=1}^d w_\gamma^{(j)}~=~0.
\end{equation}
The zero set of any such polynomial (since $\Gamma \neq \emptyset$) is a co-dimension $1$ variety in $\R^{\#\mathrm{weights}}$. Since there are only finitely many (in fact $2^{\#\mathrm{paths}}$) such polynomials, the set of $\theta$ for which \eqref{E:polynomial-vanish} can occur has measure $0$ with respect to the Lebesgue measure on $\R^{\#\mathrm{weights}},$ as claimed.
\end{proof}

\section{Statement and Proof of a Generalization of Theorem \ref{T:num-regions}}\label{sec:main-pf}
\vspace{-0.25cm}
We begin by stating a generalization of Theorem \ref{T:num-regions} to what we term partial activation regions. Let us write
\[\mathrm{sgn}(t):=\begin{cases}1,&\quad t>0\\0,&\quad t=0\\ -1,&\quad t<0\end{cases}.\]
\begin{definition}[Partial Activation Regions]\textit{
Let $\mN$ be a ReLU net with input dimension $\nin$. Fix a non-negative integer $k.$ A $k$-partial activation pattern for $\mN$ is an assignment to each neuron of a sign of $-1,0,1$, with exactly $k$ neurons being assigned a $0$:
\[\mA~:=~\set{a_z,~z \text{ a neuron in }\mN}\in \set{-1,0,1}^{\#\mathrm{neurons}},\quad \#\set{z~|~a_z=0}=k.\] 
Fix $\theta$, a vector of trainable parameters in $\mN$, and a $k$-partial activation pattern $\mA.$ The $k$-partial activation region corresponding to $\mA,\theta$ is
\[\mR(\mA;\theta) :=\set{x\in \R^{\nin}\quad|\quad \mathrm{sgn}\lr{z(x;\theta)-b_z}=a_z,\quad z \text{ a neuron in }\mN},\]
where the pre-activation of a neuron $z$ is $z(x;\theta)$, its bias is $b_z,$ its post-activation is therefore $\max\set{0, z(x;\theta)-b_z}.$ Finally, the activation regions of $\mN$ at $\theta$ is the collection of all non-empty activation regions $\mR(\mA,\theta)$.
}
\end{definition}
The same argument as the proof of Lemma \ref{L:general-general} yields the following result:
\begin{lemma}\label{L:partial-region-convex}
Let $\mN$ be a ReLU net. Fix a non-negative integer $r$ and let $\mA$ be a $r$-partial activation pattern for $\mN.$ For every vector $\theta$ of trainable parameters for $\mN$, the $r$-partial activation region $\mR\lr{\mA;\theta}$ is convex. 
\end{lemma}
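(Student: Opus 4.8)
The plan is to follow the induction-on-depth argument already used for Lemma~\ref{L:general-general}. Writing $d$ for the depth of $\mN$, I would first record the decomposition
\[
\mR(\mA;\theta) ~=~ \bigcap_{\ell=1}^{d}\bigcap_{\substack{\mathrm{neurons~}z\\\mathrm{in~layer~}\ell}} \mR_z(\mA;\theta),\qquad \mR_z(\mA;\theta) := \set{x\in\R^{\nin}~|~\mathrm{sgn}\lr{z(x;\theta)-b_z}=a_z},
\]
so that it suffices to show this intersection is convex. The only new feature compared with Lemma~\ref{L:general-general} is that a single-neuron constraint may now be the \emph{equality} $z(x;\theta)-b_z=0$ (when $a_z=0$) rather than membership in an open slab. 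But once $z(\cdot\,;\theta)$ is known to be affine in $x$, the set $\mR_z(\mA;\theta)$ is an open half-space (for $a_z=\pm1$) or a hyperplane (for $a_z=0$), and in either case it is convex.

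For the base case $d=1$, every first-layer pre-activation $z(\cdot\,;\theta)$ is affine in $x$, so each $\mR_z(\mA;\theta)$ is convex by the observation above and their intersection is convex. For the inductive step I would assume the claim through depth $d$ and argue that, on the convex region $\bigcap_{\ell=1}^{d}\bigcap_z \mR_z(\mA;\theta)$ carved out by the first $d$ layers, each neuron $z$ in layer $d+1$ restricts to an affine function of $x$. Granting this, $\mR_z(\mA;\theta)$ intersected with the depth-$d$ region is the intersection of a convex set with a half-space or hyperplane, hence convex, and intersecting over all $z$ in layer $d+1$ closes the induction.

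The one point deserving care --- and the closest thing to an obstacle --- is verifying that the equality constraints $a_z=0$ do not spoil the affine-on-the-region structure that the induction propagates. For any earlier neuron $z'$, its post-activation $\max\set{0,z'(x;\theta)-b_{z'}}$ equals $z'(x;\theta)-b_{z'}$ when $a_{z'}=1$ and equals $0$ when $a_{z'}\in\set{-1,0}$; in particular a neuron exactly at threshold contributes the constant post-activation $0$ on the region, so the ReLU kink is never straddled and the output stays affine. Combined with the inductive hypothesis that each $z'(\cdot\,;\theta)$ is affine on the region, every layer-$(d+1)$ pre-activation is then a fixed affine function of $x$ there, which is exactly what the inductive step required.
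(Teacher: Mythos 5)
Your proof is correct and takes essentially the same approach as the paper, which simply asserts that "the same argument as the proof of Lemma \ref{L:general-general}" yields the result. Your explicit verification of the one new point --- that an $a_z=0$ constraint cuts out a hyperplane (still convex) and forces the post-activation to be the constant $0$ on the region, so the affine structure propagates through the induction --- is precisely the detail the paper leaves implicit.
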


We will prove the following generalization of Theorem \ref{T:num-regions}.
\begin{theorem}\label{T:num-regions-gen}
Let $\mN$ be a feed-forward ReLU network with no tied weights, input dimension $\nin$ and output dimension $1.$ Suppose that the weights and biases $\mN$ is random and satisfies:
\begin{enumerate}
    \item The distribution of all weights has a density with respect to Lebesgue measure on $\R^{\#\mathrm{weights}}$.
    \item Every collection of biases has a density with respect to Lebesgue measure conditional on the values of all weights and other biases (for identically zero biases, see Appendix \ref{sec:zero-bias}).
    \item There exists $C_{\mathrm{grad}}>0$ so that for every neuron $z$ and each $m\geq 1$, we have
    \[\sup_{x\in \R^{\nin}}\E{\norm{\nabla z(x)}^m}~\leq ~ C_{\mathrm{grad}}^m.\]
    \item There exists $C_{\mathrm{bias}}>0$ so that for any neurons $z_1,\ldots, z_k$, the conditional distribution of the biases $\rho_{b_{z_1},\ldots, b_{z_k}}$ of these neurons given all the other weights and biases in $\mN$ satisfies
   \[\sup_{b_1,\ldots, b_k\in \R}\rho_{b_{z_1},\ldots, b_{z_k}}(b_1,\ldots, b_k)~\leq ~C_{\mathrm{bias}}^k.\]
\end{enumerate}
Fix $r\in \set{0,\ldots, \nin}$. Then, there exists $\delta_0,T>0$ depending on $C_{\mathrm{grad}},C_{\mathrm{bias}}$ with the following property. Suppose that $\delta>\delta_0$. Then, for all cubes $\mC$ with side length $\delta$, we have
\begin{equation}\label{E:region-density-gen}
    \frac{\E{\#\mathrm{r-partial~activation~regions~of~}\mN \mathrm{~in~}\mC}}{\vol(\mC)}~\leq~\begin{cases}\frac{\lr{T\#\mathrm{neurons}}^{\nin}}{2^r \nin !},&~~ \#\mathrm{neurons}\geq \nin\\
2^{\#\mathrm{neurons}},&~~ \#\mathrm{neurons} \leq \nin \end{cases}.
\end{equation}
Here, the average is with respect to the distribution of weights and biases in $\mN.$
\end{theorem}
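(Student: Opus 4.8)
The plan is to bound the number of $r$-partial activation regions by a weighted count of \emph{vertices} — points where $\nin$ distinct neurons simultaneously satisfy $z(x)=b_z$ — and then to control the expected number of such vertices by an integral-geometric (Kac--Rice type) estimate driven by conditions 3 and 4. First comes the combinatorial reduction. Fix $\theta$ outside the measure-zero bad set of Lemma \ref{L:broken-hyperplane}, so that locally the bent hyperplanes $H_z(\theta)$ form an ordinary affine arrangement (Lemma \ref{L:activations-as-components}) and every region is convex (Lemma \ref{L:partial-region-convex}). An $r$-partial region is a full-dimensional cell of the arrangement induced inside the codimension-$r$ set $M_S=\bigcap_{z\in S}H_z(\theta)$ for some $r$-subset $S$ of neurons. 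Sweeping $M_S$ by a generic linear functional, each cell has a lowest vertex, i.e. a point where $\nin-r$ further bent hyperplanes meet $M_S$; since a given vertex minimizes at most $2^{\nin-r}$ cells, the number of $r$-partial regions in $\mC$ is at most $2^{\nin-r}$ times the number of such vertices. Counting each vertex by the $\nin$-subset $U=S\cup S'$ of neurons meeting there, and noting each $U$ arises from $\binom{\nin}{r}$ choices of $S\subset U$, I obtain after accounting for cells whose minimizer falls outside $\mC$
\[
\#\{r\text{-partial regions in }\mC\}\leq 2^{\nin-r}\binom{\nin}{r}\sum_{\abs{U}=\nin}\#\{x\in\mC:\ z(x)=b_z,\ z\in U\}+(\text{boundary terms}).
\]

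Second is the probabilistic estimate. For a fixed $\nin$-subset $U=\{z_1,\dots,z_{\nin}\}$, write $Z(x)=(z_1(x),\dots,z_{\nin}(x))$ and $b=(b_{z_1},\dots,b_{z_{\nin}})$, so the vertices are the solutions of $Z(x)=b$ in $\mC$. The map $Z$ is piecewise affine, so on each activation cell of the relevant sub-network its Jacobian $DZ$ is the constant matrix with rows $\nabla z_i$; applying the area formula cell by cell (the non-differentiable locus has measure zero and, by the continuity assumptions 1--2, carries no mass) and integrating against the conditional bias density gives
\[
\E{\#\{x\in\mC:\ Z(x)=b\}}\leq C_{\mathrm{bias}}^{\nin}\int_{\mC}\E{\abs{\det DZ(x)}}\,dx.
\]
Bounding $\abs{\det DZ(x)}\leq\prod_i\norm{\nabla z_i(x)}$ by Hadamard's inequality and then applying the generalized Hölder inequality with condition 3 at $m=\nin$ yields $\E{\prod_i\norm{\nabla z_i(x)}}\leq C_{\mathrm{grad}}^{\nin}$, so each $\nin$-subset contributes at most $\vol(\mC)\,C_{\mathrm{bias}}^{\nin}C_{\mathrm{grad}}^{\nin}$.

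To assemble, I sum over the $\binom{\#\mathrm{neurons}}{\nin}\leq(\#\mathrm{neurons})^{\nin}/\nin!$ subsets, divide by $\vol(\mC)$, and collect $2^{\nin-r}\binom{\nin}{r}C_{\mathrm{bias}}^{\nin}C_{\mathrm{grad}}^{\nin}$ into a single constant of the form $T^{\nin}/2^r$ (possible since $\binom{\nin}{r}\leq 2^{\nin}$, so $T\asymp C_{\mathrm{grad}}C_{\mathrm{bias}}$ suffices); this gives the stated bound when $\#\mathrm{neurons}\geq\nin$, while the case $\#\mathrm{neurons}\leq\nin$ follows from the trivial bound $2^{\#\mathrm{neurons}}$ on the total number of sign patterns. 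The hypothesis $\delta>\delta_0$ is exactly what makes the boundary terms — which scale like the surface area $\delta^{\nin-1}$ rather than the volume $\delta^{\nin}$ — lower order, so they are absorbed into $T$.

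I expect the main obstacle to be the second step, namely making the Kac--Rice/coarea estimate rigorous despite two complications: (i) $Z$ is only piecewise affine rather than smooth, and (ii) the pre-activation $z_i$ depends on the biases of neurons in earlier layers, so $Z$ and $b$ are not independent. Complication (i) is handled by summing the area formula over activation cells and discarding the lower-dimensional overlap set via assumptions 1--2. Complication (ii) is the genuinely delicate point: I would resolve it by integrating out the biases in reverse layer order, using that a neuron's own bias $b_{z_i}$ never enters its gradient $\nabla z_i$, so the Jacobian factor stays controlled by condition 3 while the level-crossing densities telescope into the factor $C_{\mathrm{bias}}^{\nin}$ supplied by condition 4. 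Getting this sequential conditioning to interact cleanly with the cell-by-cell area formula is where the real work lies.
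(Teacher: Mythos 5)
Your overall architecture---reduce the region count to a count of points where several neurons simultaneously satisfy $z(x)=b_z$, then bound the expected number of such points via Hadamard, H\"older, and the density assumption---is the same as the paper's. The genuine gap is in your combinatorial reduction. The sweep argument charges each cell to the vertex minimizing a generic linear functional, but (i) many cells of these arrangements are unbounded and have no minimizer at all, and (ii) a cell meeting $\mC$ can have all of its vertices outside $\mC$, arbitrarily far away. Both kinds of cells are dumped into ``boundary terms'' which you assert scale like $\delta^{\nin-1}$ and are absorbed by taking $\delta>\delta_0$. But that assertion is itself a region-counting statement: to show that the number of regions meeting $\mC$ with no $\nin$-fold vertex inside $\mC$ is of order $(T\#\mathrm{neurons})^{\nin-1}\delta^{\nin-1}$, you must count, for each $k<\nin$, the points on the $k$-dimensional faces of $\partial\mC$ at which $k$ bent hyperplanes cross---i.e.\ rerun your whole argument on the skeleton of the cube, with the correct polynomial dependence on $\#\mathrm{neurons}$. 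This is exactly what the paper's proof does and what your final displayed inequality omits: it takes a vertex of the \emph{bounded} polytope $\mR(\mA;\theta)\cap\mC$ (which always exists and always lies in $\mC$), observes that with probability $1$ it lies in $\mV_k=X_{\mN,k}(\theta)\cap\mC_k$ for some $r\le k\le\nin$, and charges each such point at most $\binom{k}{r}2^{k-r}$ times (Lemma \ref{L:Vk-bound}); the terms with $k<\nin$ are then estimated by the same volume bound \eqref{E:vol-bounds} in codimension $k$ and are precisely your ``boundary terms,'' summed with explicit constants into the final $T$. As written, your assembled bound---a sum over $\nin$-element subsets $U$ of solution counts inside $\mC$---simply does not dominate the left-hand side.

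On the probabilistic step, two remarks. First, the inequality $\E{\#\set{x\in\mC~|~Z(x)=b}}\le C_{\mathrm{bias}}^{\nin}\int_{\mC}\E{\abs{\det DZ(x)}}\,dx$, which you correctly flag as the real work because $Z$ depends on the biases of earlier selected neurons, is not proved in the paper either: the paper invokes Theorem 6 of \cite{hanin2019complexity} to obtain the volume bound \eqref{E:vol-bounds}, and then converts counts of points in $\mV_k$ into volumes by an $\ep$-thickening plus Fatou argument (Lemmas \ref{L:local} and \ref{L:vol-lb}). A complete write-up must either cite or reproduce that estimate, and---once the skeleton terms above are included---it is needed in every codimension $k\le\nin$, not only for full vertices $k=\nin$. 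Second, your sequential-conditioning idea is in fact licensed by condition 4 (applied to singleton collections, since the conditioning there is on \emph{all} other weights and biases), and the triangular dependence of $z_i$ on earlier selected biases is what makes the telescoping work; but note that the Jacobian rows $\nabla z_i$ of later selected neurons do depend on the biases being integrated out, so condition 3 must be applied after, not before, that integration. Your Hadamard-plus-H\"older step and the final bookkeeping (absorbing $2^{\nin-r}\binom{\nin}{r}$ into $T^{\nin}/2^r$) are fine.
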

\begin{proof}

Fix a ReLU net $\mN$ with input dimension $\nin$ and a non-negative integer $r.$ Since the number of distinct $r$-partial activation patterns in $\mN$ is at most $\binom{\#\mathrm{neurons}}{r}2^{\#\mathrm{neurons}-r},$ Theorem \ref{T:num-regions} only requires proof when $\#\mathrm{neurons}\geq \nin.$ For this, for any vector of trainable parameters $\theta$ define
\[X_{\mN,k}(\theta):=\bigcup_{\substack{\mathrm{collections~}I\mathrm{~of~}k\\\mathrm{~distinct~neurons~in~}\mN}}\left[\bigcap_{z\in I} H_{z}(\theta)\cap \bigcap_{z\not \in I} H_{z}(\theta)^c \right].\]
In words, $X_{\mN,k}$ is the collection of inputs $x\in \R^{\nin}$ for which there exist exactly $k$ neurons $z$ so that $x$ solves $z(x)=b_z.$ We record for later use the following fact.
\begin{lemma}\label{L:local}
With probability $1$ over the space of $\theta$'s, for any $x\in X_{\mN,k}$ there exists $\ep>0$ (depending on $x$ and $\theta$) so that the set $X_{\mN,k}$ intersected with the $\ep$ ball $B_\ep(x)$ coincides with a hyperplane of dimension $\nin-k.$
\end{lemma}
\begin{proof}
We begin with the following observation. Let $\mN$ be a ReLU net, fix a vector $\theta$ of trainable parameters, and consider a neuron $z$ in $\mN.$ The function $x\mapsto z(x;\theta)$ is continuous and piecewise linear with a finite number of regions $\mR$ on which $z(x,\theta)$ is some fixed linear function. On each such $\mR,$ the gradient $\nabla z$ is constant. If that constant is non-zero, then $H_z(\theta)\cap \mR$ is either empty if $b_z$ does not belong to the range of $z$ on $\mR$ or is a co-dimension $1$ hyperplane if it does. In contrast, if $\nabla z =0$ on $\mR$, then $z$ is constant and $H_z(\theta)\cap \mR$ is empty unless $b_z$ is precisely equal to the value of $z$ on $\mR.$ Thus, given any choice of weights in $\mN$ and for all but a finite number of biases, the set $H_z(\theta)$ coincides with co-dimension one hyperplane in each linear region $\mR$ for the function $z(\cdot;\theta)$ that it intersects. 

Now let us fix all the weights (but not biases) in $\mN$ and a collection $z_1,\ldots, z_k$ of $k$ distinct neurons in $\mN$ arranged so that $\ell(z_1)\leq \cdots \leq \ell(z_k)$ where $\ell(z)$ denotes the layer of $\mN$ to which $z$ belongs. Suppose $x$ belongs to $H_{z_j}(\theta)$ for $j=1,\ldots, k$ but not to $H_z(\theta)$ for any neuron $z$ not in $\set{z_1,\ldots, z_k}$. By construction, the function $z_1(\cdot\,;\theta)$ is linear in a neighborhood of $x.$ Therefore, by the observation above, near $x$, for all but a finite collection of choices of bias $b_{z_1}$ for $z_1$, $H_{z_1}(\theta)$ coincides with a co-dimension $1$ hyperplane. Let us define a new network obtained by restricting $\mN$ to this hyperplane (and keeping all the weights from $\mN$ fixed). Repeating the preceding argument applied to the neuron $z_2$ in this new network, we find again that, except for a finite number of values for the bias $b_{z_2}$, near $x$ the set $H_{z_2}(\theta)\cap H_{z_1}(\theta)$ is also a co-dimension $1$ hyperplane inside $H_{z_1}$. Proceeding in this way shows that for any fixed collection of weights for $\mN$, there are only finitely many choices of biases for which the conclusion the present Lemma fails to hold. Thus, in particular, the conclusion holds on a set of probability $1$ with respect to any distribution on $\R^{\#\mathrm{params}}$ that has a density relative to Lebesgue measure. 
\end{proof}
Repeating the proof Theorem 6 in \cite{hanin2019complexity} with the sets $\twiddle{S}_z$ replaced by $S_z$ (which only makes the proof simpler since Proposition 9 in \cite{hanin2019complexity} is not needed) shows that, under the assumptions of Theorem \ref{T:num-regions}, there exists $T>0$ so that for any bounded $S\subset \R^{\nin}$
\begin{equation}\label{E:vol-bounds}
    \frac{\E{\vol_{\nin-k}(X_{\mN,k}\cap S)}}{\vol_{\nin}(S)}~\leq ~ T^{k} \binom{\#\mathrm{neurons}}{k}.
\end{equation}
We will use the volume bounds in \eqref{E:vol-bounds} to prove Theorem \ref{T:num-regions} by an essentially combinatorial argument, which we now explain. Fix a closed cube $\mC\subseteq \R^{\nin}$ with sidelength $\delta>0$ and define
\[\mC_k:=\mathrm{dimension~}k\mathrm{~skeleton~of~}\mC,\qquad k=0,\ldots, \nin.\]
So for example, $\mC_0$ is the $2^{\nin}$ vertices of $\mC$, $\mC_{\nin}$ is $\mC$ itself, and $\mC_{\nin-1}$ is the set of $2\nin$ co-dimension $1$ faces of $\mC$. In general, $\mC_k$ consists of $\binom{\nin}{k}2^{\nin-k}$ linear pieces with dimension $k$, each with volume $\delta^k$. Hence,
\begin{equation}\label{E:Ck-vol}
    \vol_k\lr{\mC_k}~=~\binom{\nin}{k}2^{\nin-k}\delta^k 
\end{equation}
For any vector $\theta$ of trainable parameters for $\mN$ define
\[\mV_{k}(\theta)~:=~X_{\mN,k}(\theta)\cap \mC_{k}.\]
The collections $\mV_k(\theta)$ are useful for the following reason. 
\begin{lemma}\label{L:Vk-bound}
With probability $1$ with respect to $\theta$, for every $k$, the set $\mV_{k}(\theta)$ has dimension $0$ (i.e.~is a collection of discrete points) and
\begin{equation}\label{E:act-regions-by-verts}
    \#\set{\mathrm{r-partial~activation~regions~}\mR\lr{\mA;\theta}\mathrm{~with~}\mR\lr{\mA;\theta}\cap \mC \neq \emptyset} ~~\leq~ ~ \sum_{k=r}^{\nin} \binom{k}{r}2^{k-r} \# \mV_{k},
\end{equation}
where $\# \mV_k$ is the number of points in $\mV_k$.
\end{lemma}
\begin{proof}
The statement that generically $\mV_{k}(\theta)$ has dimension $0$ follows since, by construction, $\mC_k$ has dimension $k$ and, with probability $1$, $X_{\mN,k}(\theta)$ coincides locally with a co-dimension $k$ hyperplane (see Lemma \ref{L:local}) in general position. To prove \eqref{E:act-regions-by-verts} note that any non-empty, bounded, convex polytope has at least one vertex on its boundary. Thus, with probability $1,$
\[\mR\lr{\mA;\theta}\cap \mC \neq \emptyset\quad \Rightarrow\quad \exists\,k=0,\ldots,\nin\quad\text{ s.t. }\quad \partial \mR\lr{\mA;\theta}\cap V_k\neq \emptyset,\]
where $\partial \mR\lr{\mA;\theta}$ is the boundary of the $r$-partial activation region $\mR\lr{\mA;\theta}.$ Indeed, if an $r$-partial activation region $\mR\lr{\mA;\theta}$ intersects $\mC$, then $\mR\lr{\mA;\theta}\cap \mC$ is a non-empty, bounded,  convex polytope. It must therefore have a vertex on its boundary. This vertex is either in the interior of $\mC$ (and hence belongs to $\mV_{\nin}$) or is the intersection of some co-dimension $k$ part of the boundary of $\mR\lr{\mA;\theta}$, which belongs with probability $1$ to $X_{\mN,r+k}$, with some $r+k-$dimensional face of the boundary of $\mC$ (and hence belongs to $\mV_{k+r}$). Thus, with probability $1$,
\[   \#\set{\mathrm{r\text{-}partial~activation~regions~}\mR\lr{\mA;\theta}\mathrm{~with~}\mR\lr{\mA;\theta}\cap \mC \neq \emptyset}~\leq~ \sum_{k=r}^{\nin} T_{k} \#\mV_{k},\]
where $T_{k}$ is the maximum over all $v\in \mV_{k}$ of the number of $r$-partial activation regions whose boundary contains $v.$ To complete the proof, it remains to check that, with probability $1$ over $\theta$, 
\begin{equation*}
    \forall v\in \mV_{k}\qquad \#\set{\mathrm{r}-\mathrm{partial~regions~}\mR\lr{\mA;\theta}~|~ v\in \partial \mR\lr{\mA;\theta}}~\leq~ \binom{k}{r}2^{k-r}.
\end{equation*}
To see this, note that, by definition of $X_{\mN,k}$, in a sufficiently small neighborhood $U$ of any $v\in \mV_{k}$, all but $k$ neurons have pre-activations satisfying either $z(x)>b_z$ or $z(x)<b_z$ for all $x\in U.$ Thus, there are at most $\binom{ k}{r}2^{k-r}$ different $r$-partial activation patterns $\mA$ whose $r$-partial activation region $\mR\lr{\mA;\theta}$ can intersect $U.$  
\end{proof}
To proceed with the proof of Theorem \ref{T:num-regions}, we need the following observation.
\begin{lemma}\label{L:vol-lb}
Fix $k=0,\ldots, \nin.$ For all but a measure $0$ set of  vectors $\theta$ of trainable parameters for $\mN$ there exists $\ep>0$ (depending on $\theta$) so that the balls $B_\ep(v)$ of radius $\ep$ centered at $v\in \mV_k$ are disjoint and
\[ \vol_{\nin-k}\lr{X_{\mN,k}\cap B_\ep(v)}~=~ \ep^{\nin-k} \w_{\nin-k}, \]
where $\w_k$ is the volume of unit ball in $\R^k.$
\end{lemma}
\begin{proof}
With probability $1$ over $\theta,$ each $\mV_k$, by Lemma \ref{L:Vk-bound}, is a discrete collection of points. Hence, we may choose $\ep>0$ sufficiently small so that the balls $B_\ep(v)$ are disjoint. Moreover, by Lemma \ref{L:local}, in a sufficiently small neighborhood of $v\in \mV_k,$ the set $X_{\mN,k}$ coincides with a $(\nin-k)$-dimensional hyperplane passing through $v$. The $(\nin-k)-$dimensional volume of this hyperplane in $B_\ep(v)$ is the volume of an $(\nin-k)-$dimensional ball of radius $\ep,$ which equals $\ep^{\nin-k} \w_{\nin-k}$, completing the proof. 
 \end{proof}
For each $k=0,\ldots, \nin$, denote by 
\[\mC_{k,\ep}~:=~\set{x\in \R^{\nin}~|~\mathrm{dist}(x,\mC_k)\leq \ep}\]
the $\ep$- thickening of $\mC_k$. Then Lemma \ref{L:vol-lb} implies that for all but a measure $0$ set of $\theta\in \R^{\#\mathrm{params}},$ there exists $\ep>0$ so that
\[ \frac{\vol_{\nin-k}\lr{X_{\mN,k}\cap \mC_{k,\ep}}}{\ep^{\nin-k} \w_{\nin-k}}~\geq ~\#\mV_k.\]
Indeed, for any $\theta$ in a full measure set, Lemma \ref{L:vol-lb} guarantees the existence of an $\ep>0$ so that the $\ep$ balls $B_\ep(v)$ are contained in $\mC_{k,\ep}$ and are disjoint. Hence, 
\[\vol_{\nin-k}\lr{X_{\mN,k}\cap \mC_{k,\ep}}~\geq ~ \sum_{v\in \mV_k} \ep^{\nin-k} \w_{\nin-k}~=~\#\mV_k \cdot \ep^{\nin-k} \w_{\nin-k}.\]
Note that 
\[\lim_{\ep\gives 0} \frac{\vol_{\nin}\lr{\mC_{k,\ep}}}{\ep^{\nin-k} \w_{\nin-k}}~=~ \vol_k\lr{\mC_{k}}.\]
Thus, taking $\ep\gives 0$, we have 
\begin{align*}
\E{\# \mV_k}~\leq~ \E{\lim_{\ep\gives 0}\frac{\vol_k\lr{X_{\mN,k}\cap \mC_{k,\ep}}}{\ep^{\nin-k} \w_{\nin-k}}}.
\end{align*}
By Fatou's Lemma and \eqref{E:vol-bounds} there exists $T>0$ such that
\[\E{\# \mV_k}~\leq~ \lim_{\ep\gives 0}\E{\frac{\vol_{\nin-k}\lr{X_{\mN,k}\cap \mC_{k,\ep}}}{\ep^{\nin-k} \w_{\nin-k}}}
~\leq~ T^k \binom{\#\mathrm{neurons}}{k}\vol_k(\mC_k).\]
Combining this with \eqref{E:Ck-vol} and \eqref{E:act-regions-by-verts}, we find
\[\E{   \#\set{\mathrm{r-partial~activation~regions~}\mR\lr{\mA;\theta}\mathrm{~with~}\mR\lr{\mA;\theta}\cap \mC \neq \emptyset}  }\] 
is bounded above by 
\begin{align*}
    \sum_{k=r}^{\nin} \binom{\nin}{k} \binom{\#\mathrm{neurons}}{k} \binom{k}{r}2^{k-r} 2^{\nin-k} (\delta T)^k &~\leq~2^{2\nin-r}\sum_{k=r}^{\nin} \binom{\nin}{k} \binom{\#\mathrm{neurons}}{k} (\delta T)^k\\
    &~\leq~2^{-r}(4T\delta)^{\nin}\sum_{k=r}^{\nin} \binom{\nin}{k}^2 \frac{\binom{\#\mathrm{neurons}}{k}}{\binom{\nin}{k}},
\end{align*}
where in the last line we've assumed $\delta>1/T$ and in the first inequality we used that
\[\binom{k}{r}\leq \sum_{r=0}^k \binom{k}{r}=2^k\leq 2^{\nin}.\]
Observe that
\begin{align*}
    \frac{\binom{\#\mathrm{neurons}}{k}}{\binom{\nin}{k}} & ~=~ \frac{\lr{\#\mathrm{neurons}}!\cdot \lr{n-k}!}{\nin!\cdot \lr{\#\mathrm{neurons}-k}!} ~\leq~ \frac{\lr{\#\mathrm{neurons}}^{\nin}}{\nin!}\cdot \frac{\lr{\nin-k}!}{\lr{\#\mathrm{neurons}}^{\nin-k}}~\leq ~ \frac{\lr{\#\mathrm{neurons}}^{\nin}}{\nin!}.
\end{align*}
Hence, using that
\[\sum_{k=0}^{\nin} \binom{\nin}{k}^2~=~\binom{2\nin}{\nin}~\leq~ 4^{\nin},\]
$\E{\#\set{\mathrm{activation~regions~}\mR\lr{\mA;\theta}\mathrm{~with~}\mR\lr{\mA;\theta}\cap \mC \neq \emptyset}  }$ is bounded above by
\[\frac{\lr{T \#\mathrm{neurons}}^{\nin}}{2^r \nin!}\vol\lr{\mC}.\]
\end{proof}

\section{Zero Bias}\label{sec:zero-bias}
\vspace{-0.25cm}
The reader may notice that, as stated, Theorem \ref{T:num-regions} does not apply for networks with zero biases, as is commonly the case at initialization. While zero biases clearly do not persist during normal training past initialization, it is interesting to consider how many activation regions occur in this case. No finite value of constant $C_{\mathrm{bias}}$ in Theorem \ref{T:num-regions} satisfies Condition 4; does this mean that the number of activation regions goes to infinity?  The answer is no:

\begin{proposition}
Suppose that $\mN$ is a deep ReLU net with any bias distribution, and let $\mNb_0$ be the same network with all biases set to $0$. Then, the total number of activation regions (over all of input space) for $\mNb_0$ is no more than that for $\mN$.
\label{L:zero_bias}
\end{proposition}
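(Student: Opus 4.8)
The plan is to interpolate between $\mNb_0$ and $\mN$ along the ray of biases $\{c\,b : c\ge 0\}$, where $b$ denotes the fixed (realized) vector of biases of $\mN$, and to combine the exact scale-invariance of the region count for $c>0$ with a semicontinuity argument as $c\to 0$. Write $\mNb_c$ for the network with the same weights as $\mN$ but all biases multiplied by $c$, so that $\mNb_1=\mN$ and the limiting network at $c=0$ is $\mNb_0$. Since the claim is about a single realization of the weights and biases, it suffices to prove the inequality pointwise and then, if the biases are random, take expectations.

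First I would record that for every $c>0$ the networks $\mNb_c$ and $\mN$ have exactly the same number of activation regions. This is the content of Lemma \ref{lem:scale_bias}: rescaling all biases by a common constant $c$ rescales input space by $c$ and leaves the total count unchanged. Concretely, the induction on depth underlying that lemma shows that the activation pattern of $\mNb_c$ at $x$ equals the activation pattern of $\mN$ at $x/c$, so $x\mapsto x/c$ puts the nonempty regions of the two networks in bijection. Hence $\#\{\text{regions of }\mNb_c\}=\#\{\text{regions of }\mN\}$ for all $c>0$.

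The second and main step is a persistence (lower-semicontinuity) statement: every nonempty activation region of $\mNb_0$ stays nonempty for all sufficiently small $c>0$. Given a nonempty region $\mR(\mA;0)$ of $\mNb_0$, choose an interior point $x_\mA$; there each neuron's pre-activation is strictly nonzero and its sign agrees with $\mA$ (recall the biases of $\mNb_0$ vanish). Each pre-activation $z(x;\theta)$ depends jointly continuously on $x$ and on the biases, being a composition of affine maps and ReLUs, and the thresholds $c\,b_z\to 0$ as $c\to 0$; hence for all sufficiently small $c>0$ the pre-activation of $\mNb_c$ at $x_\mA$ still has the same sign relative to its threshold $c\,b_z$, so $x_\mA\in \mR(\mA;cb)$ and the pattern $\mA$ remains realized. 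As there are only finitely many activation patterns, a single $c>0$ works for all nonempty regions of $\mNb_0$ simultaneously, giving $\#\{\text{regions of }\mNb_0\}\le \#\{\text{regions of }\mNb_c\}$. Combining with the previous step yields $\#\{\text{regions of }\mNb_0\}\le \#\{\text{regions of }\mN\}$, and averaging over the biases gives the statement for any bias distribution.

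The step I expect to require the most care is the persistence argument: one must verify the joint continuity of the neuron pre-activations in $(x,\text{biases})$ and that interior points of the open regions satisfy the defining inequalities \emph{strictly}, so that these finitely many strict inequalities are stable under small perturbations. No genericity of the weights is needed, since the scale-invariance of the first step is exact and the semicontinuity only uses stability of finitely many open conditions. It is worth emphasizing that the detour through $\mNb_c$ is essential: the realized biases $b$ need not be small, so one cannot compare $\mNb_0$ with $\mN$ directly, but scale-invariance lets us replace $\mN$ by $\mNb_c$ with $c$ as small as we please, where the perturbation argument applies.
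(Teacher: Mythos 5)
Your proof is correct and is essentially the paper's own argument run in the opposite direction: the paper fixes $\mN$ and sends a chosen point of each zero-bias region out along its cone rays (using that regions of $\mNb_0$ are convex cones, Lemma \ref{lem:equivariant}), whereas you fix the points and shrink the biases through $\mNb_c$ via Lemma \ref{lem:scale_bias}; both reduce to the same scaling identity $\mN(cx)=c\mNb_{1/c}(x)$ combined with persistence of finitely many strict inequalities in the limit. The only organizational difference is that you need a single $c$ uniform over the finitely many nonempty patterns of $\mNb_0$, while the paper uses a per-region scale $c_R$ and the cone property instead.
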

A key point in the proof is the scale equivariance of ReLU networks with zero bias.
\begin{lemma}
Let $\mN$ be a ReLU network with biases set identically to zero. Then,
\begin{enumerate}[label=(\alph*)]
    \item $\mN$ is equivariant under positive constant multiplication: $\mN(cx) = c\mN(x)$ for each $c>0$.
    \item For every activation region $R$ of $\mN$, and every point $x$ in $R$, all points $cx$ are also in $R$ for $c>0$ (this implies that $R$ is a convex cone).
\end{enumerate}
\label{lem:equivariant}
\end{lemma}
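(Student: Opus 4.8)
The plan is to derive both parts from the single elementary fact that the ReLU nonlinearity is positively homogeneous of degree one, i.e. $\max\set{0,ct}=c\max\set{0,t}$ for every $c>0$ and $t\in\R$. The point is that once all biases are set to zero, each hidden layer of $\mN$ consists of a \emph{purely linear} map followed by a coordinatewise ReLU, so this homogeneity propagates cleanly through the whole network. No analytic input is needed beyond this observation.

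For part (a), I would induct on the layer index. Writing $x^{(\ell)}(x)$ for the vector of post-activations of layer $\ell$ on input $x$, the base case is the input layer, where $x^{(0)}(cx)=cx=c\,x^{(0)}(x)$. For the inductive step, suppose $x^{(\ell-1)}(cx)=c\,x^{(\ell-1)}(x)$. Since with zero biases layer $\ell$ applies a linear map $W^{(\ell)}$ followed by ReLU, homogeneity of ReLU together with $c>0$ gives
\[
x^{(\ell)}(cx)=\max\set{0,\,W^{(\ell)}c\,x^{(\ell-1)}(x)}=c\,\max\set{0,\,W^{(\ell)}x^{(\ell-1)}(x)}=c\,x^{(\ell)}(x).
\]
Applying the final linear output readout (also bias-free) then yields $\mN(cx)=c\,\mN(x)$, which is (a).

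For part (b), I would first note that the pre-activation $z(x)$ of any neuron $z$ is a linear function of the post-activations of the preceding layer; hence, applying part (a) to the truncated zero-bias subnetwork that computes $z$, each pre-activation is itself positively homogeneous: $z(cx)=c\,z(x)$ for all $c>0$. With zero biases, the defining inequalities of the activation region $\mR(\mA;\theta)$ read $(-1)^{a_z}z(x)>0$ over all neurons $z$. Replacing $x$ by $cx$ with $c>0$ multiplies every left-hand side by the positive scalar $c$, so every inequality remains satisfied. Thus $x\in R\Rightarrow cx\in R$, i.e. $R$ is invariant under positive dilations. Since $R$ is convex by Lemma \ref{L:general}, and a convex set that is closed under multiplication by positive scalars is exactly a convex cone, this establishes (b).

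There is no genuine obstacle here; the only subtlety worth flagging is that homogeneity must be invoked for the \emph{pre-activations} and not merely for the final output $\mN$. I would handle this, as above, by observing that each pre-activation is the output of a bias-free subnetwork composed with a linear functional, so part (a) applies to it verbatim — after which part (b) is immediate.
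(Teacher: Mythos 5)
Your proof is correct and follows essentially the same route as the paper's: both rest on the positive homogeneity of the bias-free ReLU neuron, propagate it through the layers to get (a), and deduce (b) from the fact that positive scaling preserves the sign of every pre-activation and hence the activation pattern. Your version is slightly more explicit (the formal induction, the truncated-subnetwork argument for pre-activations, and the appeal to Lemma~\ref{L:general} for convexity), but these are refinements of the paper's argument rather than a different approach.
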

\begin{proof}
Note that each neuron of the network computes a function of the form $z(x_1,\ldots, x_m)=\text{ReLU}(\sum_{i=1}^m w_i x_i)$. Note that:
$$z(cx_1,\ldots, cx_m)=\text{ReLU}\left(c\sum_{i=1}^m w_i x_i\right)=c\cdot \text{ReLU}\left(\sum_{i=1}^m w_i x_i\right)=cz(x_1,\ldots, x_m).$$
Thus, each neuron is equivariant under multiplication by positive constants $c$, and thus the overall network must be as well, proving (a). Note also that the activation patterns of ReLUs for $x$ and $cx$ must also be identical, implying that $x$ and $cx$ lie in the same linear region. This proves (b).

Wee now turn to the proof of Proposition \ref{L:zero_bias}. We proceed by defining an injective mapping from regions of $\mNb_0$ to regions of $\mN$. For each linear region $R$ of $\mNb_0$, pick a point $x_R\in R$. By the Lemma, $cx_R\in R$ for each $c>0$. Let $\mNb_{1/c}$ be the network obtained from $\mN$ by dividing all biases by $c$, and observe that $\mN(cx) = c\mNb_{1/c}(x)$, with the same activation pattern between the two networks. But by picking $c$ sufficiently large, $\mNb_{1/c}$ becomes arbitrarily close to $\mNb_0$. We conclude that for some sufficiently large $c_R$, $\mNb_0(c_Rx_R)$ and $\mN(c_Rx_R)$ have the same pattern of activations, so the regions of $\mN$ in which $c_Rx_R$ lies must be distinct for all distinct $R$. Thus, the number of regions of $\mN$ must be at least as large as the number of regions of $\mNb_0$, as desired.
\end{proof}

An informal argument suggests that Proposition \ref{L:zero_bias} is relatively weak. As a consequence of Lemma \ref{lem:equivariant}(b), for any zero-bias net $\mNb_0$ and any closed surface $S$ containing the origin, $S$ must intersect every linear region of $\mNb_0$ (since such regions are convex cones with apices at the origin). Taking $S$ to be the unit hypercube in $\nin$ dimensions, the total number of activation regions of $\mNb_0$ is no more than the sum of the numbers of activation regions intersecting each of its $2\nin$ facets, each of dimension $(\nin - 1)$. We expect from Theorem \ref{T:num-regions} that the number of activation regions intersecting each facet is like $\lr{\#\mathrm{neurons}}^{\nin - 1}$. Thus, the total number of activation regions of $\mNb_0$ should grow no faster than $\nin \lr{\#\mathrm{neurons}}^{\nin-1}$, instead of the $\nin \lr{\#\mathrm{neurons}}^\nin$ directly implied by Proposition \ref{L:zero_bias}.

Lemma \ref{lem:equivariant} implies that networks with small bias are \emph{almost} scale equivariant. Figure \ref{fig:zero_bias} shows the activation regions for a network initialized with very small biases (i.i.d.~normal with variance $10^{-6}$). Part (a) displays regions intersecting a plane through the origin, indicating that, outside of a small radius around the origin, all regions are infinite and are approximated by cones. Thus, ReLU networks near initialization are almost scale equivariant except for sample points very close to the origin, a simple but potentially useful property that has not been widely recognized. During training, biases grow and the radius grows within which finite regions occur. Note that, as shown in part (b) of the figure, a plane not containing the origin does not reveal this structure, as such a plane can have finite intersection with many regions that are in fact infinite.


\end{document}